\newif\ifanonymize
\newif\ifarxiv
\newcommand{\ourtitle}{Normalizing Flows on Tori and Spheres}
\icmltitlerunning{\ourtitle}
\newcommand{\R}{\mathbb{R}}
\newcommand{\sphere}[1]{\mathbb{S}^{#1}}
\newcommand{\torus}[1]{\mathbb{T}^{#1}}
\newcommand{\SU}[1]{\text{SU}(#1)}
\newcommand{\MM}{\mathcal{M}}
\newcommand{\NN}{\mathcal{N}}
\newcommand{\br}[1]{\mathopen{}\left(#1\right)\mathclose{}}
\newcommand{\set}[1]{\left\{#1\right\}}
\newcommand{\abs}[1]{\left|#1\right|}
\newcommand{\norm}[1]{\left\|#1\right\|}
\newcommand{\g}{\,|\,}
\newcommand{\pderiv}[2]{\frac{\partial{#1}}{\partial{#2}}}
\newcommand{\bigo}[1]{\mathcal{O}\br{#1}}
\newcommand{\Tcs}{T_{c\rightarrow s}}
\newcommand{\Tsc}{T_{s\rightarrow c}}
\newtheorem{defin}{Definition}
\newtheorem{prop}{Proposition}
\begin{document}

\twocolumn[
\icmltitle{\ourtitle}

\icmlsetsymbol{equal}{*}

\begin{icmlauthorlist}
\icmlauthor{Danilo Jimenez Rezende}{equal,dm}
\icmlauthor{George Papamakarios }{equal,dm}
\icmlauthor{S\'ebastien Racani\`ere}{equal,dm}
\icmlauthor{Michael S.\ Albergo}{nyu}
\icmlauthor{Gurtej Kanwar}{mit}
\icmlauthor{Phiala E.\ Shanahan}{mit}
\icmlauthor{Kyle Cranmer}{nyu}
\end{icmlauthorlist}

\icmlaffiliation{dm}{DeepMind, London, U.K.}
\icmlaffiliation{nyu}{New York University, New York, U.S.A.}
\icmlaffiliation{mit}{Center for Theoretical Physics, Massachusetts Institute of Technology, Cambridge, MA 02139, U.S.A} 

\icmlcorrespondingauthor{Danilo Jimenez Rezende}{danilor@google.com}
\icmlcorrespondingauthor{George Papamakarios}{gpapamak@google.com}
\icmlcorrespondingauthor{S\'ebastien Racani\`ere}{sracaniere@google.com}

\icmlkeywords{normalizing flows, directional statistics}

\vskip 0.3in
]

\printAffiliationsAndNotice{\icmlEqualContribution} 

\begin{abstract}
Normalizing flows are a powerful tool for building expressive distributions in high dimensions. So far, most of the literature has concentrated on learning flows on Euclidean spaces. Some problems however, such as those involving angles, are defined on spaces with more complex geometries, such as tori or spheres. In this paper, we propose and compare expressive and numerically stable flows on such spaces. Our flows are built recursively on the dimension of the space, starting from flows on circles, closed intervals or spheres.
\end{abstract}

\section{Introduction}
\label{sec:intro}
Normalizing flows are a flexible way of defining complex distributions on high-dimensional data. A \textit{normalizing flow} maps samples from a base distribution $\pi(u)$ to samples from a target distribution $p(x)$ via a transformation $f$ as follows:
\begin{equation}\label{eq:flow_sampling}
    x = f(u)\quad\text{where}\quad u\sim \pi(u).
\end{equation}
The transformation $f$ is restricted to be a \textit{diffeomorphism}: it must be invertible and both $f$ and its inverse $f^{-1}$ must be differentiable. This restriction allows us to calculate the target density $p(x)$ via a change of variables:
\begin{equation}\label{eq:flow_jac}
    p(x) = \pi\br{f^{-1}(x)}\abs{\det\br{\pderiv{f^{-1}}{x}}}.
\end{equation}
In practice, $\pi(u)$ is often taken to be a simple density that can be easily evaluated and sampled from, and either $f$ or its inverse $f^{-1}$ are implemented via neural networks such that the Jacobian determinant is efficient to compute.

A normalizing flow implements two operations: sampling via \Cref{eq:flow_sampling}, and evaluating the density via \Cref{eq:flow_jac}. These operations have distinct computational requirements: generating samples and evaluating their density requires only $f$ and its Jacobian determinant, whereas evaluating the density of arbitrary datapoints requires only $f^{-1}$ and its Jacobian determinant. Thus, the intended usage of the flow dictates whether $f$, $f^{-1}$ or both must have efficient implementations. For an overview of various implementations and associated trade-offs, see \citep{papamakarios2019normalizing}.

In most existing implementations of normalizing flows, both $u$ and $x$ are defined to live in the Euclidean space $\R^D$, where $D$ is determined by the data dimensionality. 
However, this Euclidean formulation is not always suitable, as some datasets are defined on spaces with non-Euclidean geometry. For example, if $x$ represents an angle, its `natural habitat' is the $1$-dimensional circle; if $x$ represents the location of a particle in a box with periodic boundary conditions, $x$ is naturally defined on the $3$-dimensional torus.

The need for probabilistic modelling of non-Euclidean data often arises in applications where the data is a set of angles, axes or directions \citep{mardia2009directional}. Such applications include protein-structure prediction in molecular biology \citep{hamelryck2006sampling, mardia2007protein, boomsma2008generative, shapovalov2011smoothed}, rock-formation analysis in geology \citep{peel2001mixtures}, and path navigation and motion estimation in robotics \citep{feiten2013rigid, senanayake2018directional}.
Non-Euclidean spaces have also been explored in machine learning, and specifically generative modelling, as latent spaces of variational autoencoders \citep{davidson2018svae, falorsi2018explorations, wang2019riemannian, mathieu2019poincare, wang2019wasserstein}.

A more general formulation of normalizing flows that is suitable for non-Euclidean data is to take $u\in\MM$ and $x\in\NN$, where $\MM$ and $\NN$ are \textit{differentiable manifolds} and $f:\MM\rightarrow\NN$ is a diffeomorphism between them. A difficulty with this formulation is that $\MM$ and $\NN$ are \textit{diffeomorphic} by definition, so they must have the same topological properties \citep{kobayashi1963foundations}. To circumvent this restriction, \citet{gemici2016normalizing} first project $\MM$ to $\R^D$, apply the usual flows there, and then project $\R^D$ back to $\NN$.
However, a naive application of this approach can be problematic when $\MM$ or $\NN$ are not diffeomorphic to $\R^D$; in this case, the projection maps will necessarily contain singularities (i.e.~points with non-invertible Jacobian), which may result in numerical instabilities during training. Another approach, proposed by \citet{falorsi2019reparameterizing} for the case where $\NN$ is a Lie group, is to apply the usual Euclidean flows on the Lie algebra of $\NN$ (i.e.~the tangent space at the identity element), and then use the exponential map to project the Lie algebra onto $\NN$.
Since these maps are not from $\NN$ to itself, they are not straightforward to compose. Also, the exponential map is not bijective and is computationally expensive in general. We discuss these issues in more detail in \Cref{sec:related_work}.

\textbf{Our main contribution} is to propose a new set of expressive normalizing flows for the case where $\MM$ and $\NN$ are compact and connected differentiable manifolds. Specifically, we construct flows on the $1$-dimensional circle $\sphere{1}$, the $D$-dimensional torus $\torus{D}$, the $D$-dimensional sphere $\sphere{D}$, and arbitrary products of these spaces.
The proposed flows can be made arbitrarily flexible, and avoid the numerical instabilities of previous approaches.
Our flows are applicable when we already know the manifold structure of the data, such as when the data is a set of angles or latent variables with a prescribed topology. We empirically demonstrate the proposed flows on synthetic inference problems designed to test the ability to model sharp and multi-modal densities.

\section{Methods}
\label{sec:methods}
We will begin by constructing expressive and numerically stable flows on the circle $\sphere{1}$. Then, we will use these flows as building blocks to construct flows on the torus $\torus{D}$ and the sphere $\sphere{D}$. The torus $\torus{D}$ can be written as a Cartesian product of $D$ copies of $\sphere{1}$, which will allow us to build flows on $\torus{D}$ autoregressively. The sphere $\sphere{D}$ can be written as a transformation of the cylinder $\sphere{D-1}\times[-1, 1]$, which will allow us to build flows on $\sphere{D}$ recursively, using flows on $\sphere{1}$ and $[-1, 1]$ as building blocks. By combining these constructions, we can build a wide range of compact connected manifolds of interest to fundamental and applied sciences.

\subsection{Flows on the Circle $\sphere{1}$}\label{sec:flows_circle}

Depending on what is most convenient, we will sometimes view $\sphere{1}$ as embedded in $\R^2$, that is $\left\{(x_1, x_2)\in\R^2;x_1^2+x_2^2=1\right\}$, or
parameterize it  by a coordinate $\theta\in[0, 2\pi]$, identifying $0$ and $2\pi$ as the same point.
In this section, we describe how to construct a diffeomorphism $f$ that maps the circle to itself.

Since $0$ and $2\pi$ are identified as the same point, the transformation $f: [0, 2\pi]\rightarrow[0, 2\pi]$ must satisfy appropriate boundary conditions to be a valid diffeomorphism on the circle. The following conditions are sufficient:
\begin{align}
    f(0) &= 0, \label{eq:bc.0}\\
    f(2\pi) &= 2\pi, \label{eq:bc.1}\\
    \nabla f(\theta) & > 0, \label{eq:bc.mono}\\
    \nabla f(\theta)|_{ \theta=0 } &= \nabla f(\theta)|_{ \theta=2\pi }.\label{eq:bc.smooth}
\end{align}
The first two conditions ensure that $0$ and $2\pi$ are mapped to the same point on the circle. The third condition ensures that the transformation is strictly monotonic, and thus invertible. Finally, the fourth condition ensures that the Jacobians agree at $0$ and $2\pi$, thus the probability density is continuous.

A restriction in the above conditions is that $0$ and $2\pi$ are fixed points. Nonetheless, this restriction can be easily overcome by composing a transformation $f$ satisfying these conditions with a phase translation \mbox{$\theta\mapsto (\theta +\phi) \!\!\mod 2\pi$}, where $\phi$ can be a learnable parameter. Such a phase translation is volume-preserving, so it does not incur a volume correction in the calculation of the probability density.

Given a collection $\set{f_i}_{i=1, \ldots, K}$ of transformations satisfying the above conditions, we can combine them into a more complex transformation $f$ that also meets these conditions.
One such mechanism for combining transformations is function composition $f = f_K \circ \cdots \circ f_1$, which can easily be seen to satisfy \Cref{eq:bc.0,eq:bc.mono,eq:bc.1,eq:bc.smooth}. Alternatively, we can combine transformations using convex combinations, as any convex combination defined by
\begin{equation}\label{eq:convex_combination}
    f(\theta) = {\sum}_{i} \rho_i f_i(\theta),\,\text{ where }\rho_i \geq 0 \text{ and } {\sum}_i \rho_i = 1
\end{equation} also satisfies \Cref{eq:bc.0,eq:bc.mono,eq:bc.1,eq:bc.smooth}.
By alternating between function compositions and convex combinations, we can construct expressive flows on $\sphere{1}$ from simple ones.

Next, we describe three circle diffeomorphisms that by construction satisfy the above conditions: \textit{M\"obius transformations}, \textit{circular splines}, and \textit{non-compact projections}.

\subsubsection{M\"obius Transformation}\label{sec:moebius}

M\"obius transformations have previously been used to define distributions on the circle and sphere \citep[see e.g.][]{kato2008circular,wang2013extensions,kato2015m}. We will first describe the M\"obius transformation in the general case of the sphere $\sphere{D}$, and then show how to adapt it, when $D=1$, to create expressive flows on the circle.

\setlength\intextsep{2pt}
\begin{wrapfigure}{L}{82pt}
\begin{tikzpicture}
\draw (0,0) circle (1cm);
\draw (0.5, 0.) circle (0.02cm) node[anchor=south west] {$\omega$};
\draw (0.9285714, -0.37115386) circle (0.02cm) node[anchor=north west] {$z$};
\draw (-0.5, 0.8660254) -- (0.9285714, -0.37115386);
\draw (-0.5, 0.8660254) -- (0.5, -0.8660254);
\draw (0., 0.) circle (0.02cm) node[anchor=north east] {$0$};
\draw (-0.5, 0.8660254) circle (0.02cm) node[anchor=south east] {$g_\omega(z)$};
\draw (0.5, -0.8660254) circle (0.02cm) node[anchor=north west] {$h_\omega(z)$};
\end{tikzpicture}
\end{wrapfigure}
Consider $\sphere{D}$, the unit sphere in $\R^{D+1}$. Let $\omega$ be a point in $\R^{D+1}$ with norm strictly less than $1$. For any point $z$ in $\sphere{D}$, draw a straight line between $z$ and $\omega$ (as shown on the left). This line intersects $\sphere{D}$ at exactly two distinct points. One is $z$. Call the other one $g_\omega(z)$.

\begin{defin}
We define the M\"obius transformation $h_\omega(z)$ of $z$ with centre $\omega$ to be $-g_\omega(z)$.
\end{defin}

An explicit formula for $h_\omega$ is given by
\begin{equation}\label{eq:h_omega}
    h_\omega(z) = \frac{1-\norm{\omega}^2}{\norm{z-\omega}^2}(z-\omega) - \omega.
\end{equation}
When $\omega=0$, the transformation $h_\omega$ is just the identity.
In general, the variables $z$ and $\omega$ in \Cref{eq:h_omega} are meant to be $(D+1)$-dimensional real vectors. When $D=1$, the equation also reads correctly if $z$ and $\omega$ are taken to be complex numbers. In this case, \Cref{eq:h_omega} and the M\"obius transformations of the complex plane are related as explained in formula (6) of \citet{kato2015m}.

The transformation $h_\omega$ expands the part of the sphere that is close to $\omega$, and contracts the rest. Hence, it can transform a uniform base distribution into a unimodal smooth distribution parameterized by $\omega$. One property of the M\"obius transformation is that it does not become more expressive by composing various $h_{\omega}$. This is because the set of transformations $Rh_\omega$, where $R$ can be any matrix in $\text{SO}(D+1)$, forms a group under function composition \citep[see Theorem 2 of][]{kato2015m}. Since composition of two such transformations remains a member of the group, their expressivity is not increased.

In case of the circle (where $D=1$), we would like to get expressive flows using a convex combination as in \Cref{eq:convex_combination} of M\"obius transformations. The transformation $h_\omega$ defines a map from angles to angles that satisfies \Cref{eq:bc.mono,eq:bc.smooth}, but it does not satisfy \Cref{eq:bc.0,eq:bc.1}. This can be fixed by adding a rotation after $h_\omega$. Let $R_{\omega}$ be a rotation in $\R^2$ with centre $(0,0)$ that maps $h_\omega(1, 0)$ back to $(1, 0)$. We define $f_\omega(\theta)$ to be the polar angle, in $[0, 2\pi)$, of $R_{\omega}\circ h_\omega(z)$, and extend $f_\omega$ by continuity to the whole range $[0, 2\pi]$ by setting $f_\omega(2\pi) = 2\pi$. This function satisfies \Cref{eq:bc.0,eq:bc.mono,eq:bc.1,eq:bc.smooth}.
We can then easily combine various $f_{\omega}$ via convex combinations and obtain expressive distributions on $\sphere{1}$---see \Cref{fig:expressive_moebius} in the appendix for an illustration. Unlike a single M\"obius transform, a convex combination of two or more M\"obius transforms is not analytically invertible, but it can be numerically inverted with precision $\epsilon$ using bisection search with $\bigo{\log \frac{1}{\epsilon}}$ iterations.

\subsubsection{Circular Splines (CS)}\label{sec:circular_splines}

Spline flows is a methodology for creating arbitrarily flexible flow transformations from $\R$ to itself, first proposed by \citet{muller2019neural} and further developed by \citet{durkan2019cubic, durkan2019neural}. Here, we will show how to adapt the \textit{rational-quadratic spline flows} of \citet{durkan2019neural} to satisfy the sufficient boundary conditions of circle diffeomorphisms.

Rational-quadratic spline flows define the transformation $f:\R\rightarrow\R$ piecewise as a combination of $K$ segments, with each segment being a simple rational-quadratic function. Specifically, the transformation is parameterized by a set of $K+1$ knot points $\set{x_k, y_k}_{k=0, \ldots, K}$ and a set of $K$ slopes $\set{s_k}_{k=1, \ldots, K}$, such that $x_{k-1} < x_k$, $y_{k-1} < y_k$ and $s_k>0$ for all $k=1,\ldots, K$.
Then, in each interval $[x_{k-1}, x_k]$, the transformation $f$ is defined to be a rational quadratic:
\begin{equation}
    f(\theta) = \frac{\alpha_{k2} \theta^2 + \alpha_{k1} \theta +  \alpha_{k0}}{\beta_{k2} \theta^2 + \beta_{k1} \theta +  \beta_{k0}},
\end{equation}
where the coefficients $\set{\alpha_{ki}, \beta_{ki}}_{i=0, 1, 2}$ are chosen so that $f$ is strictly monotonically increasing and $f(x_{k-1}) = y_{k-1}$, $f(x_{k}) = y_{k}$, and $\nabla f(\theta)|_{\theta=x_k} = s_k$ for all $k=1\ldots, K$ \citep[see][for more details]{durkan2019neural}.

We can easily restrict $f$ to be a diffeomorphism from $[0, 2\pi]$ to itself, by setting $x_0 = y_0 = 0$ and $x_K = y_K = 2\pi$. This construction satisfies the first three sufficient conditions in \Cref{eq:bc.0,eq:bc.1,eq:bc.mono}, and can be used to define probability densities on the closed interval $[0, 2\pi]$. In addition, by setting $s_1=s_K$, we satisfy the fourth condition in \Cref{eq:bc.smooth}, and hence we obtain a valid circle diffeomorphism which we refer to as a \textit{circular spline (CS)}.

Circular splines can be made arbitrarily flexible by increasing the number of segments $K$. Therefore, unlike M\"obius transformations, it is not necessary to combine them via convex combinations to increase their expressivity. An advantage of circular splines is that they can be inverted exactly, by first locating the corresponding segment (which can be done in $\bigo{\log K}$ iterations using binary search since the segments are sorted), and then inverting the corresponding rational quadratic (which can be done analytically by solving a quadratic equation).

\subsubsection{Non-Compact Projection (NCP)}\label{sec:ncp}

As discussed in the introduction, \citet{gemici2016normalizing} project the manifold to $\R^D$, apply the usual flows there, and then project $\R^D$ back to the manifold. Naively applying this method can be numerically unstable. However, here we show that, with some care, the method can be specialized to $\sphere{1}$ in a numerically stable manner. Since this method involves projecting $\sphere{1}$ to the non-compact space $\R$, we refer to it as \textit{non-compact projection (NCP)}\@.

We will use the projection $x: (0, 2\pi) \rightarrow \R$ defined by $x(\theta) = \tan\br{\frac{\theta}{2} - \frac{\pi}{2}}$. This projection maps the circle minus the point $\theta=0$ bijectively onto $\R$. Applying the affine transformation $g(x) = \alpha x + \beta$ in the non-compact space, where $\alpha>0$ and $\beta$ are learnable parameters, defines a corresponding flow on the circle, given by
\begin{equation}\label{eq:ncp}
\begin{aligned}
    f(\theta) &= x^{-1}\circ g\circ x(\theta)\\ &= 2\tan^{-1}\br{\alpha \tan\br{\frac{\theta}{2} - \frac{\pi}{2}} + \beta} + \pi,
\end{aligned}
\end{equation}
with gradient
\begin{equation*}
    \nabla f(\theta) = \left[ \frac{1 + \beta^2}{\alpha} \sin^2\left(\frac{\theta}{2}\right) + \alpha \cos^2\left(\frac{\theta}{2}\right) - \beta \sin\theta  \right]^{-1}.
\end{equation*}
Even though the expression for $f$ is not defined at the endpoints $0$ and $2\pi$, the expression for the gradient is. The transformation satisfies the appropriate boundary conditions,
\begin{equation*}
\begin{aligned}
    \lim_{\theta \rightarrow 0^+} f(\theta) &= 0, \\
    \lim_{\theta \rightarrow 2\pi^-} f(\theta) &= 2\pi, \\
    \nabla f(\theta)|_{\theta=0} = \nabla f(\theta)|_{\theta=2\pi} &= \alpha^{-1}.
\end{aligned}
\end{equation*}
Therefore, we can extend $f$ to $[0, 2\pi]$ by continuity such that $f(0)=0$ and $f(2\pi)=2\pi$, which yields a valid circle diffeomorphism. Although not immediately obvious, NCP flows and M\"obius transformations are intimately related, as explained in \Cref{sec:ncp_mobius_equiv} \citep[see also][Section 2.1]{downs2002circular}.

The above boundary conditions are satisfied when the transformation $g$ is affine, but they are not generally satisfied when $g$ is an arbitrary diffeomorphism. This limits the type of flow we can put on the non-compact space. Therefore, instead of making $g$ more expressive, we choose to increase the expressivity of the NCP flow by combining multiple transformations $f$ via convex combinations.

Similarly to M\"obius transformations, NCP flows form a group. That's because affine maps with positive slope on $\R$ form a group, and if $g_1$ and $g_2$ are affine maps and $f_k=x\circ g_k\circ x^{-1}$, then $f_1\circ f_2 = x\circ g_1\circ g_2\circ x^{-1}$. Hence, the composition of two NCPs with parameters $(\alpha_k, \beta_k),k=1,2$ is another NCP with parameters $(\alpha_1 \alpha_2, \beta_1 + \alpha_1 \beta_2)$. So composing two NCP flows would produce a new NCP flow, leading to no increase in expressivity.

A potential issue with NCP is that, near the endpoints $0$ and $2\pi$, evaluating $f$ using \Cref{eq:ncp} directly is numerically unstable. To circumvent this numerical difficulty, we can use equivalent linearized expressions when $\theta$ is near the endpoints. For example, for $\theta$ close to $0$ we can approximate $f(\theta) \approx \theta/\alpha$, whereas for $\theta$ close to $2\pi$ we have $f(\theta) \approx 2\pi + (\theta - 2\pi)/\alpha$.

\subsection{Generalization to the Torus $\torus{D}$}\label{sec:ar:torus}

Having defined flows on the circle $\sphere{1}$, we can easily construct autoregressive flows on the $D$-dimensional torus $\torus{D}\cong (\sphere{1})^D$.
Any density $p(\theta_1, \ldots, \theta_D)$ on $\torus{D}$ can be decomposed via the chain rule of probability as
\begin{align}
    p(\theta_1, \ldots, \theta_D) = {\prod}_{i} p(\theta_i\g\theta_1, \ldots, \theta_{i-1}),
\end{align}
where each conditional $p(\theta_i\g\theta_1, \ldots, \theta_{i-1})$ is a density on $\sphere{1}$.
Each conditional density can be implemented via a flow $f_{\psi_i}: \sphere{1}\rightarrow\sphere{1}$, whose parameters $\psi_i$ are a function of $(\theta_1, \ldots, \theta_{i-1})$.
Thus the joint transformation $f: \torus{D}\rightarrow\torus{D}$ given by $f(\theta_1, \ldots, \theta_D) = (f_{\psi_1}(\theta_1), \ldots, f_{\psi_D}(\theta_D))$ is an \textit{autoregressive flow} on the torus. In the terminology of \citet[Section 3.1]{papamakarios2019normalizing}, an autoregressive flow on a torus is simply an autoregressive flow whose \textit{transformers} are circle diffeomorphisms, i.e.~obey the boundary conditions in \Cref{eq:bc.0,eq:bc.1,eq:bc.mono,eq:bc.smooth}. As with any autoregressive flow, the Jacobian of $f$ is triangular, therefore the Jacobian determinant in the density calculation in \Cref{eq:flow_jac} can be computed efficiently as the product of the diagonal terms.

The parameters $\psi_i$ of the $i$-th transformer are a function of $(\theta_1, \ldots, \theta_{i-1})$ known as the $i$-th \textit{conditioner}. In order to guarantee that the conditioners are periodic functions of each $\theta_i$, we can make $\psi_i$ be a function of $(\cos{\theta_1}, \sin{\theta_1}, \ldots,\cos{\theta_{i-1}}, \sin{\theta_{i-1}})$ instead. In our experiments, we implemented the conditioners using \textit{coupling layers} \citep{dinh2017density}. Implementations based on \textit{masking} \citep{kingma2016iaf, papamakarios2017masked} are also possible.

More generally, autoregressive flows can be applied in the same way on any manifold that can be written as a Cartesian product of circles and intervals, such as the $2$-dimensional cylinder. Flows on intervals can be constructed e.g.~using regular (non-circular) splines as described in \Cref{sec:circular_splines}. Thus, by taking $f_{\psi_i}$ to be either a circle diffeomorphism or an interval diffeomorphism as required, we can handle arbitrary products of circles and intervals.

\subsection{Generalization to the Sphere $\sphere{D}$}\label{sec:ar:sphere}

The M\"obius transformation (\Cref{sec:moebius}) can in principle be used to define flows on the sphere $\sphere{D}$ for $D\ge 2$, but, as noted, its expressivity does not increase by composition.
Increasing the expressivity via convex combinations
in $D=1$ was only possible because we expressed diffeomorphisms on $\sphere{1}$ as strictly increasing functions on $[0,2\pi]$. This construction however does not readily extend to $D\geq 2$.
Instead, we propose two alternative flow constructions for $\sphere{D}$: a \textit{recursive construction} that uses cylindrical coordinates, and a construction based on the \textit{exponential map}. In what follows,  $\sphere{D}$ will be embedded in $\R^{D+1}$ as
$\left\{(x_1,\ldots,x_{D+1})\in\R^{D+1};\sum_ix_i^2=1\right\}$.

\subsubsection{Recursive Construction}\label{sec:ar:sphere:recursion}

In \Cref{sec:flows_circle}, we described three methods for building univariate flows on $\sphere{1}$. We also described how to build univariate flows on the interval $[-1, 1]$ using splines (\Cref{sec:circular_splines}). Using circle and interval flows as building blocks, we will construct a multivariate flow on $\sphere{D}$ for $D\ge 2$ by recursing over the dimension of the sphere.

\begin{figure}[t!]
\hspace{-7pt}
    \begin{subfigure}[t]{0.5\textwidth}
        \centering
        \includegraphics[width=\textwidth]{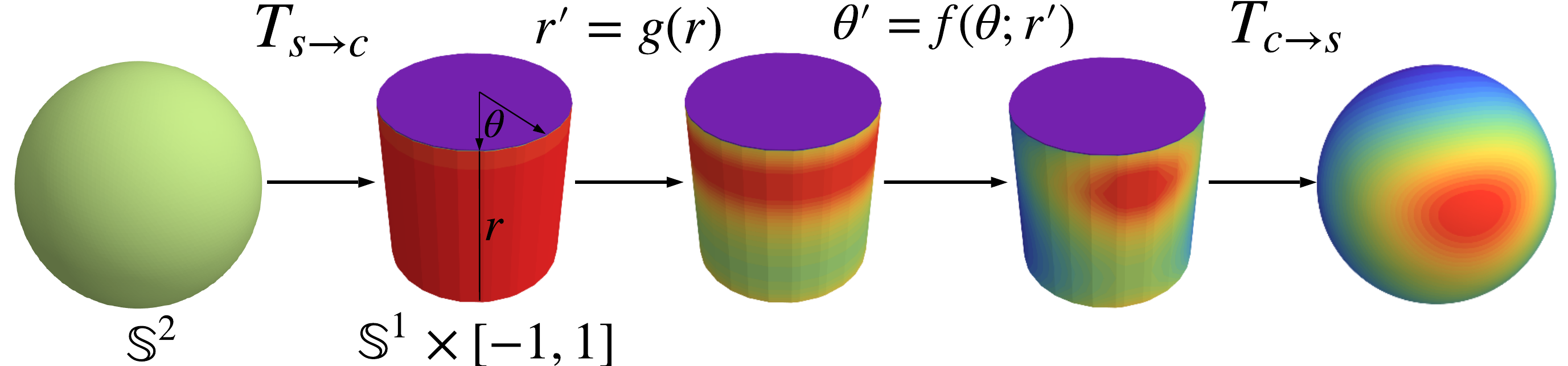}
    \end{subfigure}
\caption{Illustration of the recursive flow on the sphere $\sphere{2}$.}\label{fig:flow:s2:diagram}
\end{figure}

Our construction works as follows. First, we transform the sphere $\sphere{D}$ into the cylinder $\sphere{D-1}\times[-1, 1]$ using the map
\begin{equation}\label{eq:flow_sd_1}
\Tsc(x) = \left(\frac{x_{1:D}}{\sqrt{1 - x_{D+1}^2}}, \,x_{D+1}\right).
\end{equation}
Then a two-stage autoregressive flow is applied on the cylinder. The `height' $r\in[-1, 1]$ is transformed first by a univariate flow $g: [-1, 1] \rightarrow [-1, 1]$, and the `base' $z$ is transformed second by a conditional flow $f: \sphere{D-1} \rightarrow \sphere{D-1}$ whose parameters are a function of $g(r)$, that is
\begin{equation}\label{eq:flow_sd_2}
T_{c \rightarrow c}(z,r) = \Big(f(z; g(r)), \,g(r)\Big).
\end{equation}
Finally, the cylinder is transformed back to the sphere by
\begin{equation}\label{eq:flow_sd_3}
\Tcs(z, r) = \br{z \sqrt{1 - r^2}, \,r}.
\end{equation}
The flow $h: \sphere{D} \rightarrow \sphere{D}$ is defined to be the composition $h = \Tcs \circ T_{c \rightarrow c} \circ \Tsc$. When stacking multiple $h$ (each with its own parameters), all the internal terms in $\Tsc\circ\Tcs$ cancel out, so it is more efficient to just not compute them. Only the first $\Tcs$ and last $\Tsc$ need to be computed. \Cref{fig:flow:s2:diagram} illustrates this procedure on $\sphere{2}$.

The above recursion continues until we reach $\sphere{1}$, on which we can use the flows we have already described. Unrolling the recursion up to $\sphere{1}$ is equivalent to first transforming $\sphere{D}$ into $\sphere{1}\times[-1, 1]^{D-1}$, then applying an autoregressive flow on $\sphere{1}\times[-1, 1]^{D-1}$ as described in \Cref{sec:ar:torus}, and finally transforming $\sphere{1}\times[-1, 1]^{D-1}$ back to $\sphere{D}$. In practice, we can use any ordering of the variables in the autoregressive flow and not just the one implied by the recursion, and we can compose multiple autoregressive flows before transforming back to $\sphere{D}$. \Cref{fig:flow:s2:diagram:detailed} in the appendix illustrates the model with the recursion fully unrolled.

At this point, it is important to examine carefully the transformations $\Tsc$ and $\Tcs$. Maps from $\sphere{D-1}\times [-1, 1]$ to $\sphere{D}$ can never be diffeomorphisms since the topology of these two spaces differ.
Nevertheless, $\Tsc$ and $\Tcs$ are invertible almost everywhere, in the sense that we can remove sets of measure $0$ from $\sphere{D-1}\times[-1, 1]$ and $\sphere{D}$, and the restriction of these maps to these subsets are diffeomorphisms.
However, extra care must be taken to check that updates to the density caused by $\Tsc$ and $\Tcs$ are still numerically stable, and that the density is well-behaved everywhere on $\sphere{D}$.
In \Cref{sec:tcs}, we prove that the update to a base density $\pi(z,r)$ due to $\Tcs$ is
\begin{equation}\label{eq:tcs_density_update}
    p(\Tcs(z, r)) = \frac{\pi(z, r)}{(1-r^2)^{\frac{D}{2}-1}}.
\end{equation}
The update due to $\Tsc$ is simply the inverse of the above. Additionally, in \Cref{sec:tcs} we prove that the combined density update due to $h = \Tcs \circ T_{c \rightarrow c} \circ \Tsc$ does not have any singularity whenever $g$ is such that $g'(-1)>0$ and $g'(1)>0$. Since we implement $g$ with spline flows (\Cref{sec:circular_splines}), this condition is satisfied by construction, so the flow density is guaranteed to be finite.

Since the density update due to $\Tsc$ and $\Tcs$ can be done analytically and the rest of the transformation is autoregressive, the overall density update is efficient to compute. For $D=2$, the volume correction in \Cref{eq:tcs_density_update} is equal to $1$, and therefore $\Tcs$ and $\Tsc$ preserve infinitesimal volume elements.
Finally, we observe that the well-known \textit{von Mises--Fisher} distribution on $\sphere{2}$  \citep{fisher1953dispersion} can be obtained by transforming a uniform base density with \Cref{eq:flow_sd_1,eq:flow_sd_2,eq:flow_sd_3}, where $f$ is the identity and $g$ has a simple functional form \citep[for details, see][]{jakob2012numerically}.

\subsubsection{Exponential-Map Flows}\label{sec:exp_flow}

Ideally, we would like to specify flows directly on the manifold and avoid mapping between non-diffeomorphic sets. This motivates us to explore \textit{exponential-map flows}, a
mechanism for building flows on spheres
proposed by \citet{sei2013jacobian}.

The main idea consists of two steps. First, we define a \textit{cost-convex} scalar field $\phi(x)$ on $\sphere{D}$---for the definition of `cost-convex', see \citep[Formula 1]{sei2013jacobian}. 
Second, we construct a flow using the exponential map of the gradient $\nabla \phi(x) \in T_x\sphere{D}$, where $T_x\sphere{D}$
is the tangent space of $\sphere{D}$ at $x$. The exponential map $\exp_x: T_x\mathcal{M} \rightarrow \mathcal{M}$ on a Riemannian manifold $\mathcal{M}$
is defined as the terminal point $\gamma(1)$ of a geodesic $\gamma: [0,1] \rightarrow \mathcal{M}$ that passes through $\gamma(0)=x$ with velocity $\dot{\gamma}(0) = v$ \citep{kobayashi1963foundations}.
For a general manifold it is hard to compute exponential maps, but for the sphere $\sphere{D}$ the exponential map is straightforward:
\begin{align}
    \exp_x(v) &= x\cos \|v\| + \frac{v}{\|v\|}\sin \|v\|,
\end{align}
where $\|\cdot\|$ is the Euclidean norm.

Parameterizing a general cost-convex scalar field $\phi(x)$ remains non-trivial on $\sphere{D}$. To solve this, \citet{sei2013jacobian} proposes to construct $\phi(x)$ via a convex combination of simple scalar functions $\phi_i: [0, \pi] \rightarrow \mathbb{R}$ that satisfy $\phi_i'(0)=\phi_i'(\pi)=0$ and $\phi_i'' > -1$. Specifically, $\phi(x)$ is given by
\begin{equation}
    \phi(x) = {\sum}_i \alpha_i \phi_i(d(x, \mu_i)),
\end{equation}
where $d(x, \mu_i)$ is the geodesic distance between $x\in\sphere{D}$ and $\mu_i \in \sphere{D}$, $\alpha_i \geq 0$ and $\sum_i \alpha_i \le 1$. The exponential map of the gradient field of $\phi(x)$ is guaranteed to be a diffeomorphism from $\sphere{D}$ to $\sphere{D}$ \citep[Lemma 4]{sei2013jacobian}.

We found that it is challenging to learn concentrated multi-modal densities on $\sphere{D}$ using the polynomial and high-frequency scalar fields proposed by \citet{sei2013jacobian}.
To address this limitation, we introduce a scalar field which is inspired by radial flows \cite{tabak2013family, rezende2015variational}  and is given by
\begin{align}\label{eq:sum_of_radial_scalar_field}
    \phi(x) &= {\sum}_{i} \frac{\alpha_i}{\beta_i} e^{\beta_i (x^T\mu_i - 1)},
\end{align}
where $\alpha_i \ge 0$, $\sum_i \alpha_i \leq 1$, $\mu_i \in \sphere{D}$ and $\beta_i > 0$.
It is straightforward to see that the functions
\begin{equation*}
\phi_i(d(x, \mu_i)) = \frac{1}{\beta_i}e^{\beta_i (\cos d(x, \mu_i) - 1)} = \frac{1}{\beta_i}e^{\beta_i (x^T\mu_i - 1)}
\end{equation*}
satisfy the required conditions, therefore the map
\begin{equation}\label{eq:exp_flow}
\begin{split}
    f(x) &= \exp_x \nabla \phi(x) \\
         &= x \cos \| \nabla \phi(x)\| + \frac{\nabla \phi(x)}{\| \nabla \phi(x) \|} \sin \| \nabla \phi(x)\|
\end{split}
\end{equation}
is a diffeomorphism from $\sphere{D}$ to $\sphere{D}$.

As shown in \Cref{sec:volume.elements}, the density update due to $f$ is
\begin{equation}
    p(f(x)) = \frac{\pi(x)}{\sqrt{\det\br{ E(x)^\top J(x)^\top J(x) E(x)}}},
\end{equation}
where $J(x)$ is the Jacobian of $f$ at $x$ when $f$ is seen as a map from $\R^{D+1}$ to $\R^{D+1}$, and the columns of $E(x)$ form an orthonormal basis of vectors tangent to the sphere at $x$. Unlike the recursive flow which admits an efficient density update, computing the above density update costs $\bigo{D^3}$, so the exponential-map flow is only practical for small $D$.

\section{Related Work}
\label{sec:related_work}
The study of distributions on objects such as angles, axes and directions has a long history, and is known as \textit{directional statistics} \citep{mardia2009directional}.
According to the taxonomy of \citet{navarro2017multivariate}, directional statistics traditionally uses three approaches for defining distributions on tori and spheres: \textit{wrapping}, \textit{projecting}, and \textit{conditioning}.

\textit{Wrapping} is typically used to define distributions on $\sphere{1}$. The idea is to start with a distribution $p(x)$ on $\R$, and wrap it around $\sphere{1}$ by writing $x=\theta+2\pi k$ and marginalizing out $k$.
A common example is the \textit{wrapped Gaussian} \citep[see e.g.][Section 2.1]{jona2012spatial}. A disadvantage of wrapped distributions is that, unless $p(x)$ has bounded support, they require an infinite sum over $k$, which is generally not analytically tractable.

\textit{Projecting} is an alternative approach where the manifold is embedded into $\R^M$ with $M>D$, a distribution $p(x)$ is defined on $\R^M$, and the probability mass is projected onto the manifold. For example, if the manifold is the sphere $\sphere{D}$ embedded in $\R^{D+1}$, the probability mass in $\R^{D+1}$ can be projected onto $\sphere{D}$ by writing $x=r u_x$ where $r = \norm{x}$ and $u_x$ is a unit vector in the direction of $x$, and then integrating out $r$.
A common example is the projected Gaussian on $\sphere{1}$ \citep[see e.g.][]{wang2013projectednormal}. A difficulty with this approach is that it requires computing an integral over $r$, which may not be generally tractable.

\textit{Conditioning} also begins by embedding the manifold into $\R^M$ and defining a distribution $p(x)$ on $\R^M$ (which here can be unnormalized). Then, the distribution on the manifold $\mathcal{M}$ is obtained by conditioning $p(x)$ on $x\in\mathcal{M}$, that is, restricting $p(x)$ on $\mathcal{M}$ and re-normalizing.
Common examples when $\mathcal{M}=\sphere{1}$ are the \textit{von Mises} \citep[Section 1.3.3]{wang2013extensions} and \textit{generalized von Mises} distributions \citep{gatto2007generalized}, which are obtained by taking an isotropic or arbitrary Gaussian (respectively) in $\R^2$ and conditioning on $\norm{x}=1$. For the case of $\mathcal{M}=\sphere{D}$, a common example is the \textit{von Mises--Fisher} distribution \citep{fisher1953dispersion}, which is obtained by conditioning an isotropic Gaussian in $\R^D$ on $\norm{x}=1$, and reduces to the von Mises for $D=1$. The von Mises--Fisher distribution can also be extended to the \textit{Stiefel manifold} of sets of $N$ orthonormal $D$-dimensional vectors \citep{downs1972orientation},
and it can be generalized to the more flexible \textit{Kent} distribution, also known as \textit{Fisher--Bingham}, on $\sphere{D}$ \citep{kent1982fisherbingham}. 
Finally, distributions on $\torus{D}$ can be obtained by defining $p(x)$ in $\R^{2D}$ and conditioning on $x_{i}^2 + x_{D+i}^2 = 1$ for all $i=1, \ldots, D$. Examples include the \textit{multivariate von Mises} \citep{mardia2008multivariate} and \textit{multivariate generalized von Mises} distributions \citep{navarro2017multivariate}, which correspond to $p(x)$ being Gaussian with constrained or arbitrary covariance matrix respectively. A drawback of conditioning is that computing the normalizing constant of the resulting distribution requires integrating $p(x)$ on $\mathcal{M}$, which is not generally tractable.

In general, the above three strategies lead to distributions with tractable density evaluation and sampling algorithms only in special cases, which typically yield simple distributions with limited flexibility. One approach for increasing the flexibility of such simple distributions is via combining them into \textit{mixtures} \citep[see e.g.][]{peel2001mixtures, mardia2007protein}. Such mixtures could be used as base distributions for the flows on tori and spheres that we present in this work. However, unlike flows whose expressivity increases via composition, mixtures generally require a large number of components to represent sharp and complex distributions, and can be harder to fit in practice.

A general method for defining distributions on \textit{Lie groups}, which are groups with manifold structure, was proposed by \citet{falorsi2019reparameterizing}. The tangent space of a $D$-dimensional Lie group at the identity element is isomorphic to $\R^D$ and is known as the \textit{Lie algebra} of the group. The method is to define a distribution $p(x)$ on the Lie algebra (e.g.~using standard Euclidean flows), and then map the Lie algebra onto the group using the exponential map. If the Lie group is a compact connected manifold, the exponential map is surjective but not injective, and the method can be thought of as a generalization of \textit{wrapping}. When the Lie group is $\text{U}(1) \cong \sphere{1}$, we recover wrapped distributions on $\sphere{1}$, since in this case the exponential map is $x \mapsto (\cos x, \sin x)$. When the Lie group is $\torus{D}$, the exponential map wraps $\R^D$ around the torus along each dimension. This involves an infinite sum, but even if the sum is restricted to be finite (e.g.~by bounding the support of $p(x)$), the number of terms to be summed scales exponentially with $D$. In addition to the above, \citet{falorsi2019reparameterizing} provide concrete examples for $\mathrm{SO}(3)$ and $\mathrm{SE}(3)$. The main drawbacks of this method is that the exponential map cannot be easily composed with other maps (since it is not generally bijective), and can be expensive to compute in high-dimensions.

Finally, a general method for defining normalizing flows on a Riemannian manifold $\mathcal{M}$ was proposed by \citet{gemici2016normalizing}. This method relies on the existence of two maps: an embedding $g:\mathcal{M}\rightarrow\R^M$ and a coordinate chart $\phi:\mathcal{M}\rightarrow\R^D$, where $D\le M$. The idea is to apply the usual Euclidean flows on $\R^D$, and then transform the density onto the (embedded) manifold via the map $g\circ \phi^{-1}$.
The density update associated with this transformation is $\sqrt{\det J^\top J}$, where $J$ is the Jacobian of $g\circ \phi^{-1}$ (as shown in  \Cref{sec:volume.elements}).
This method leads to composable transformations, but is better suited for manifolds that are homeomorphic to $\R^D$ (so that $\phi$ is well-defined).
Compact manifolds such as tori and spheres are not homeomorphic to $\R^D$, hence $\phi$ cannot be defined everywhere on $\mathcal{M}$.
In such cases, the transformed density may not be well-behaved, which can create numerical instabilities during training.

\section{Experiments}
\label{sec:experiments}
For M\"obius transforms, we reparameterized the centre $\omega$, where $\norm{\omega} < 1$, in terms of an unconstrained parameter $\omega'\in\R^2$ via $\omega = \frac{0.99}{1 + \norm{\omega'}}\omega'$. The constant $0.99$ is used to prevent $\norm{\omega}$ from being too close to $1$.
For circular splines, we prevent slopes from becoming smaller than $10^{-3}$ by adding $10^{-3}$ to the output of a softplus of unconstrained parameters.
For autoregressive flows, the parameters of each conditional transformation are computed by a ReLU-MLP with layer sizes $[N_i, 64, 64, N_o]$, where $N_i$ is the number of inputs and $N_o$ the number of parameters for each transform. All flow models use uniform base distributions.

We evaluate and compare our proposed flows based on their ability to learn sharp, correlated and multi-modal target densities. 
We used targets $p(x) = e^{-\beta u(x)}/Z$ with inverse temperature $\beta$ and normalizer $Z$. We varied $\beta$ to create targets with different degrees of concentration/sharpness.
The models $q_{\eta}$ were trained by minimizing the KL divergence
\begin{equation}
    \mathrm{KL}(q_{\eta}\,\|\,p) = \mathbb{E}_{q_{\eta}(x)}\!\left[\ln q_{\eta}(x) + \beta u(x)\right] + \ln Z.
\end{equation}
For an additional evaluation of how well the flows match the target, we used $S=20{,}000$ samples $x_s$ from the \textit{trained} flows
to estimate the normalizer via importance sampling:
\begin{equation}
    Z = \mathbb{E}_{q_{\eta}(x)}\left[\frac{e^{-\beta u(x)}}{q_{\eta}(x)}\right] \approx \frac{1}{S}\sum_{s=1}^S w_s,
\end{equation}
where $w_s = e^{-\beta u(x_s)}/q_{\eta}(x_s)$. The effective sample size, \citep{arnaud2001sequential, liu2008monte}, can be estimated by
\begin{equation}
    \text{ESS} = \frac{\text{Var}_{\text{unif}}\left[e^{-\beta u(x)}\right]}{\text{Var}_q\left[\frac{e^{-\beta u(x)}}{q_{\eta}(x)}\right]} \approx \frac{\left(\sum_{s=1}^S w_s\right)^2}{\sum_{s=1}^S w_s^2}.
\end{equation}
Higher ESS indicates that the flow matches the target better (when reliably estimated). We report ESS as a percentage of the actual sample size.

\begin{figure}[tb]
\hspace{-4pt}\includegraphics[width=1.025\columnwidth]{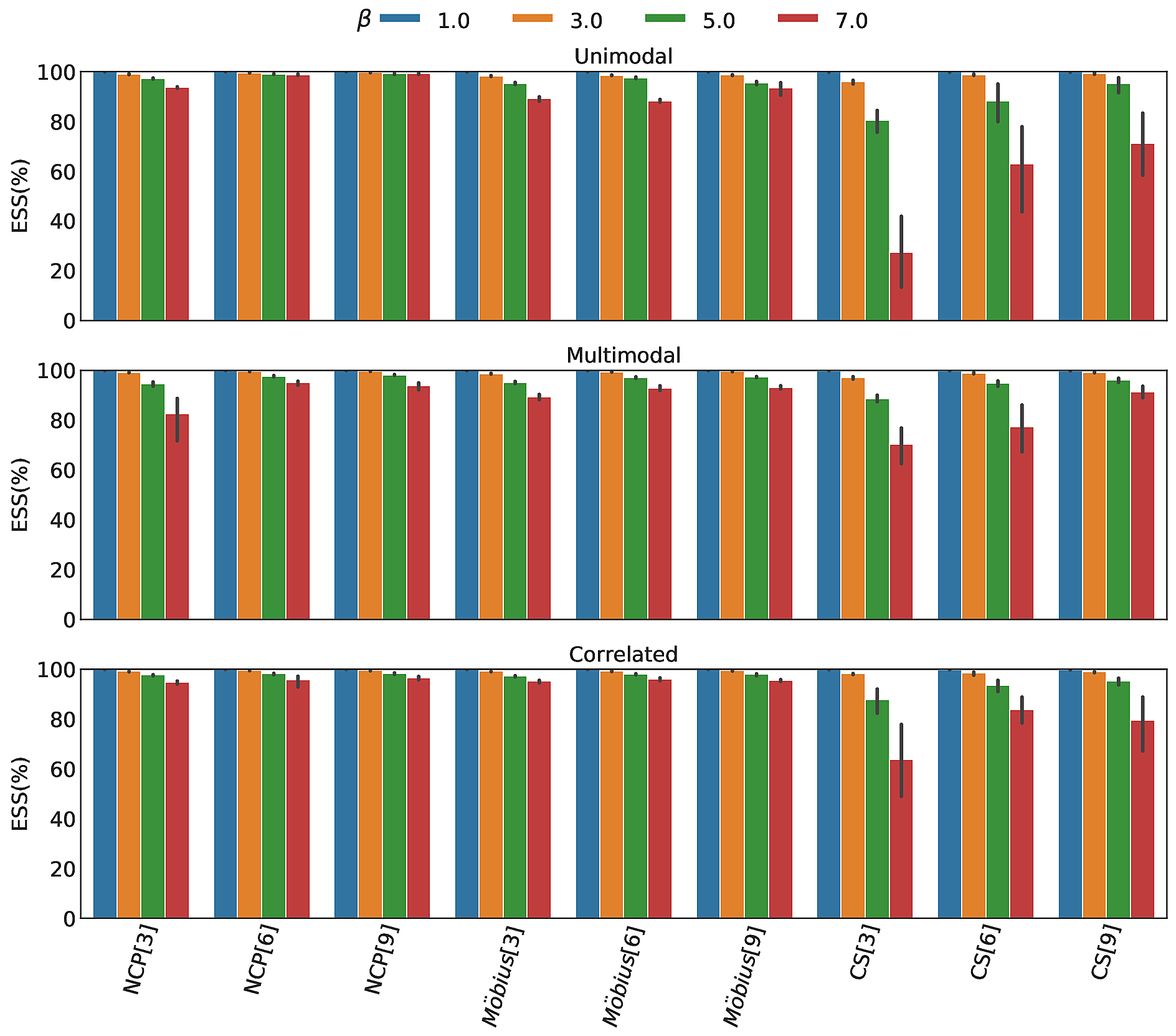}
\caption{Comparing flows on $\mathbb{T}^2$ using ESS, for the targets in \Cref{fig:flow_torus_densities} and various inverse temperatures $\beta$. A larger value of $\beta$ makes the target densities more concentrated and therefore harder to learn.
Numbers in square brackets indicate the number of components $K$ used for each transform. All values are averages across $10$ runs with different neural-network initialization.
See also \Cref{fig:results_torus_all} (appendix) for KL values and further comparisons.
{\bf Top row}: Unimodal target. {\bf Middle row}:  Multi-modal target with $3$ mixture components.  
{\bf Bottom row}: Correlated target.
\label{fig.torus}
}
\end{figure}

\begin{figure}[htb]
\hspace{-7pt}
    \begin{subfigure}[t]{0.5\textwidth}
        \centering
        \includegraphics[width=\textwidth]{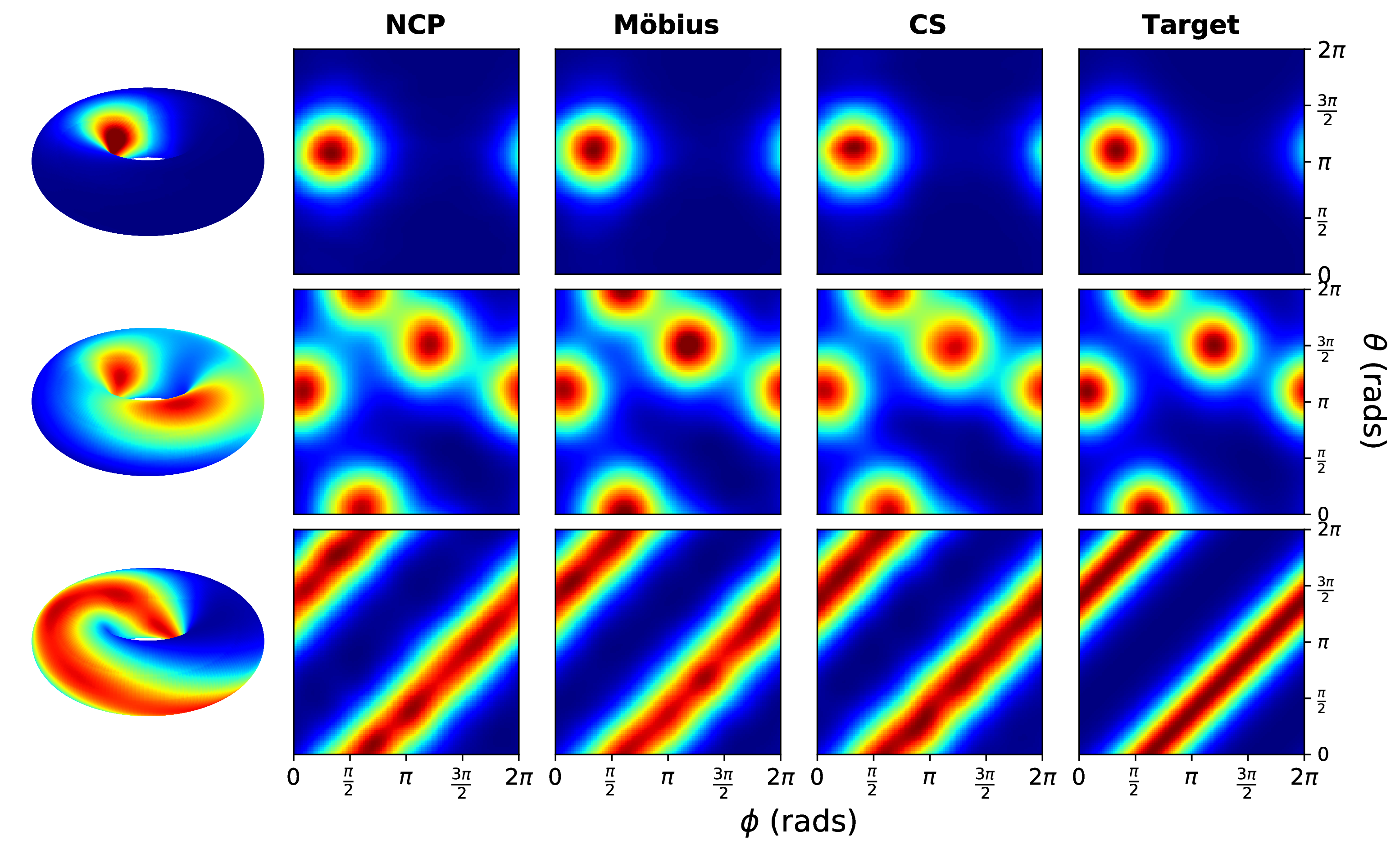}
    \end{subfigure}
\caption{Learned densities on $\torus{2}$ using NCP, M\"obius and CS flows. Densities shown on the torus are from NCP\@. }\label{fig:flow_torus_densities}
\end{figure}

\begin{table}[htb]
\begin{center}
\begin{tabularx}{\columnwidth}{@{}Xcc@{}}
\toprule
\multicolumn{1}{c}{\textbf{Model}} & \multicolumn{1}{c}{\textbf{KL [nats]}} & \multicolumn{1}{c}{\textbf{ESS}} \\ \midrule
\makecell[l]{MS ($N_T=1, K_m=12, K_s=32$)} & $0.05\,(0.01)$ & $90\%$ \\
\makecell[l]{EMP ($N_T=1$)} & $0.50\,(0.09)$ & $43\%$ \\
\makecell[l]{EMSRE ($N_T=1, K=12$)} & $0.82\,(0.30)$ & $42\%$ \\
\makecell[l]{EMSRE ($N_T=6, K=5$)} & $0.19\,(0.05)$ & $75\%$ \\ 
\makecell[l]{EMSRE ($N_T=24, K=1$)} & $0.10\,(0.10)$ & $85\%$ \\\bottomrule
\end{tabularx}
\caption{Comparing baseline and proposed flows on $\mathbb{S}^2$ using KL and ESS\@. The target density is the mixture of $4$ modes shown in \Cref{fig:sphere}\label{table:s2}. We compare recursive M\"obius-spline flow (MS), exponential-map polynomial flow (EMP) and exponential-map sum-of-radial flow (EMSRE)\@. Brackets show error bars on the KL from $3$ replicas of each experiment.
$N_T$ is the number of stacked transformations for each flow; $K_m$ is the number of centres used in M\"obius; $K_s$ is the number of segments in the spline flow; $K$ is the number of radial components in the radial exponential-map flow. The polynomial scalar field is shown in \Cref{sec:polexpmap}.}
\end{center}
\end{table}

\begin{figure}[htb]
    \centering
    \includegraphics[width=0.35\textwidth]{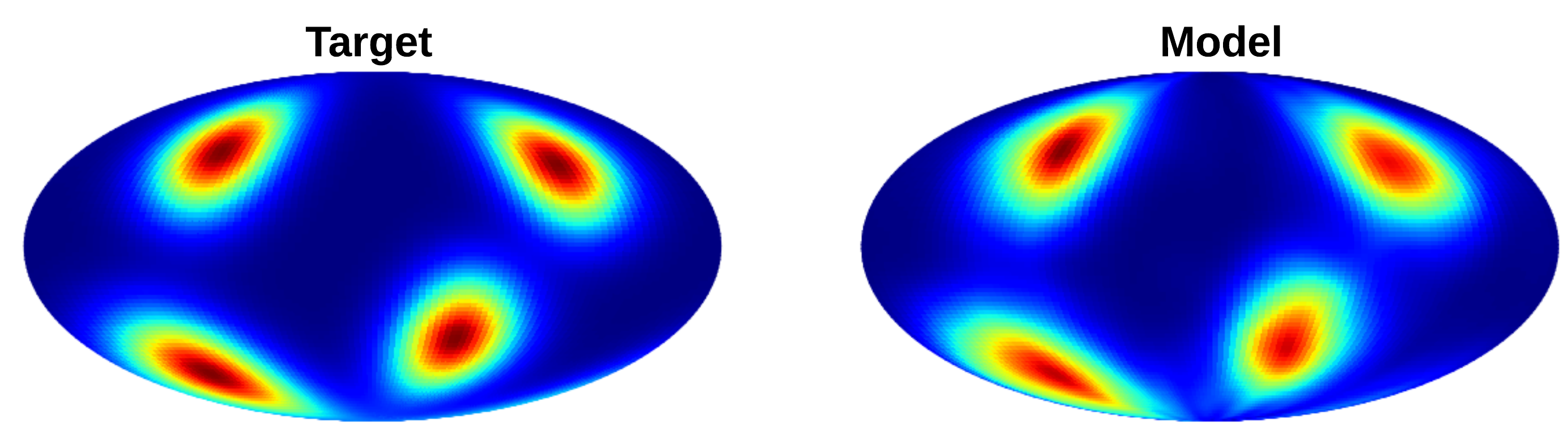}
\caption{Learned multi-modal density on $\sphere{2}$ using exponential-map flows, using the Mollweide projection for visualization. The model is a composition of $24$ exponential-map transforms, using the radial scalar field with $1$ component.
\label{fig:sphere}}
\end{figure}

\begin{figure}[htb]
\centering
\includegraphics[width=\columnwidth]{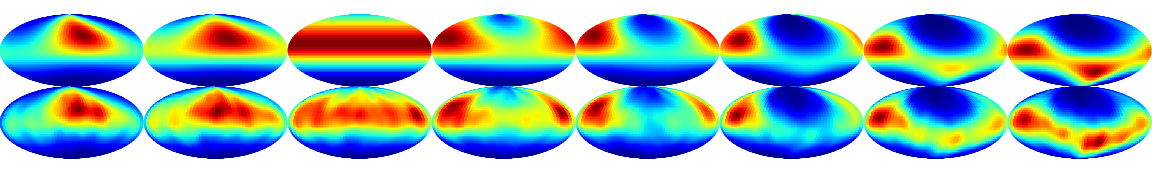}
\caption{Learned multi-modal density on $\SU{2} \cong \sphere{3}$ using the recursive flow. Each column shows an $\sphere{2}$ slice of the $\sphere{3}$ density along a fixed axis using the Mollweide projection. \textbf{Top row}: target density. \textbf{Bottom row}: learned density.
We used a M\"obius transform with $K_m=32$ for the circle, and spline transforms with $K_s = 64$ for the two intervals ($\text{ESS}=84\%$, $\text{KL}=0.14$).\label{fig:su2}}
\end{figure}

All models were optimized using Adam \cite{kingma2015adam} with learning rate $2\times 10^{-4}$, $20{,}000$ iterations, and batch size $256$. The reported error bars are the standard deviation of the average, computed from independent replicas of each experiment with identical hyper-parameters, but with different initialization of the neural network weights. All shown model densities are kernel density estimates using $20{,}000$ samples and Gaussian kernel with bandwidth $0.2$.

\Cref{fig.torus,fig:flow_torus_densities} show results for $\torus{2}$. The target densities are: a unimodal von Mises, a multi-modal mixture of von Mises, and a correlated von Mises (their precise definitions are in \Cref{table.tori.target} of the appendix). 
The results demonstrate that we can reliably learn multi-modal and correlated densities with different degrees of concentration. Among the circle flows, NCP and M\"obius performed the best, whereas CS performed less well for high $\beta$. 
An additional experiment is shown in \Cref{sec:robot_arm}, where flows on $\torus{6}$ are used to learn the posterior density over joint angles of a robot arm.

\Cref{table:s2} shows results for $\mathbb{S}^2$, where we compare the recursive flow to the exponential-map flow with polynomial and radial scalar fields. The recursive flow uses M\"obius for the circle and splines for the interval. For the exponential-map flow, the polynomial scalar field was proposed by \citet{sei2013jacobian} and is shown in \Cref{sec:polexpmap}, whereas the radial scalar field is the new one we propose in \Cref{eq:sum_of_radial_scalar_field}. The target density is the mixture of $4$ modes shown in \Cref{fig:sphere}. We demonstrate substantial improvement relative to the polynomial scalar field, but we find that the exponential-map flow with either scalar field is not yet competitive with the recursive flow.
\Cref{fig:su2} shows an example of learning a density on $\SU{2} \cong \sphere{3}$ using the recursive flow.
Finally, \Cref{sec:globe} shows an example of training a recursive flow (using splines for both the circle and the interval) on data sampled form a `map of the world' density on $\sphere{2}$.

\section{Discussion}
\label{sec:discussion}
This work shows how to construct flexible normalizing flows on tori and spheres of any dimension in a numerically stable manner. Unlike many of the distributions traditionally used in directional statistics, the proposed flows can be made arbitrarily flexible, but have tractable and exact density evaluation and sampling algorithms. We conclude with a comparison of the proposed models, a discussion of their limitations, and some preliminary thoughts on how to extend flows to other manifolds of interest to fundamental physics.

\subsection{Comparison, Scope and Limitations}

Among the flows on the circle, M\"obius and NCP performed the best, with CS performing less well for highly concentrated target densities. However, increasing the expressivity of M\"obius and NCP required convex combinations, whereas CS can be made more expressive by adding more spline segments.
As a result, CS is the cheapest to invert (it can be done analytically), whereas M\"obius and NCP (with more than one component) require a root-finding algorithm such as bisection search. Therefore, in practice it may be preferable to use CS if both density evaluation and sampling are required, and use M\"obius or NCP otherwise.

On $\sphere{D}$, the recursive flow performed better than the exponential-map flow. In addition, the recursive flow scales better to high dimensions since its density can be computed efficiently, whereas the density of the exponential-map flow has a computational cost of $\bigo{D^3}$. The theoretical advantage of the exponential-map flow is that it is intrinsic to the sphere, but this advantage did not result in a practical benefit in our experiments.

\subsection{Towards Normalizing Flows on $\SU{D}$ and $\text{U}(D)$}
\label{sec:future}

The unitary Lie groups $\text{U}(1)$, $\SU{2}$ and $\SU{3}$ are of particular interest to fundamental physics, because the symmetry groups of particle interactions are constructed from them \citep{woit2017quantum}.
We have shown that we can construct expressive flows on
$\text{U}(1)\cong\sphere{1}$ and $\SU{2}\cong\sphere{3}$.

Our recursive construction for $\sphere{D}$ provides a starting point for flows on $\SU{D}$ for $D \geq 3$ and $\text{U}(D)$ for $D \geq 2$, via recursively building $\SU{D}$ from $\SU{D-1}$ and $\text{U}(1)^{2D-1}$, and $\text{U}(D)$ from $\SU{D}$ and extra angles. These decompositions provide coordinate systems on $\SU{D}$ and $\text{U}(D)$ in terms of Euler angles as shown by \citet{tilma2002generalized, tilma2004generalized}.
Working with explicit coordinate systems on $\SU{D}$ and $\text{U}(D)$ is more involved than on spheres; figuring out the necessary details such as appropriate boundary conditions for each coordinate/angle is a direction we plan to explore in the future in order to build flows on a wider variety of compact connected manifolds.

\ifanonymize
\else
\section*{Acknowledgements}
We thank Heiko Strathmann and Alex Botev for discussions and feedback. PES is partially supported by the National Science Foundation under CAREER Award 1841699 and PES and GK are partially supported by the U.S.\ Department of Energy, Office of Science, Office of Nuclear Physics under grant Contract Number DE-SC0011090.
KC is supported by the National Science Foundation under the awards ACI-1450310, OAC-1836650, and OAC-1841471 and by the Moore-Sloan data science environment at NYU\@.
\fi

\bibliographystyle{icml2020}
\setlength{\bibsep}{0.239cm}
\bibliography{references}

\clearpage
\appendix
\section{Density Transformations on Manifolds} \label{sec:volume.elements}

In this section, we explain how to update the density of a distribution transformed from one Riemannian manifold to another by a smooth map. We only consider the case where both manifolds are sub-manifolds of Euclidean spaces.

Let $\MM$ and $\NN$ be $D$-dimensional manifolds embedded into Euclidean spaces $\R^m$ and $\R^n$ respectively. For example, $\MM$ and $\NN$ could be $\sphere{D}$ embedded in $\R^{D+1}$ as in \Cref{sec:ar:sphere}. Both manifolds inherit a Riemannian metric from their embedding spaces.
Let $T$ be a smooth injective map $T:\MM\rightarrow\NN$.
We will assume that $T$ can be extended to a smooth map between open neighbourhoods of the embedding spaces that contain $\MM$ and $\NN$, and that we have chosen such an extension.
For example, the exponential-map flow in \Cref{eq:exp_flow} can be written using the coordinates of the embedding space $\R^{D+1}$, and can thus be extended to open neighbourhoods of the embedding spaces as desired.

In what follows, we will use the fact that if $u_1,\ldots,u_D$ are vectors in $\R^n$, then the volume of the parallelepiped with sides $u_1,\ldots,u_D$ is $\sqrt{\det\br{ U^\top U}}$, where $U$ is the $n\times D$ matrix with column vectors $u_1,\ldots,u_D$. If $u_1,\ldots,u_D$ form an orthonormal system, this volume is $1$.

Let $\pi:\MM\rightarrow\R^+$ be a density on $\MM$. This defines a distribution on $\MM$ and we can use $T$ to transform it into a distribution on $\NN$. Let $p:\NN\rightarrow\R^+$ be the density of the transformed distribution. We are interested in computing $p$ assuming we know $\pi$. Let $x$ be a point on $\MM$, and $e_1,\ldots,e_D$ be an orthonormal basis of the tangent space $\text{T}_x\MM$\@. Define $E$ to be the $m\times D$ matrix with $i$-th column vector $e_i$.
Let $J$ be the $n\times m$ Jacobian of $T$, where $T$ is seen as a map between open sets in $\R^m$ and $\R^n$.
The tangent map of $T$ at $x$ transforms each $e_i$ to $Je_i$, and the matrix that collects all transformed vectors in its columns is $JE$.
Hence, the volume of the parallelepiped with sides the transformed vectors is $\sqrt{\det\br{ \br{JE}^\top J E}} = \sqrt{\det\br{ E^\top J^\top J E}}$. Therefore, the density $p$ is given by
\begin{equation}\label{eq:riemannian_det_jac}
    p(T(x)) = \frac{\pi(x)}{\sqrt{\det\br{ E^\top J^\top J E}}}.
\end{equation}
In the special case where $\mathcal{M}=\mathcal{N}=\R^D$ and $m=n=D$, the matrix $E$ is an orthogonal matrix, and the above reduces to the familiar density update in \Cref{eq:flow_jac}.

\subsection{The case of $\Tcs:\sphere{D-1}\times [-1, 1]\rightarrow \sphere{D}$}\label{sec:tcs}
In this section, we specialize to $\MM=\sphere{D-1}\times (-1, 1)$ and $\NN=\sphere{D}$ with $D\ge 2$. In particular, we will prove:
\begin{prop}\label{prop:tcs_pull-back}
Let $\pi$ be a density $\pi:\sphere{D-1}\times (-1, 1)\rightarrow\R^+$. Let $p:\sphere{D}\rightarrow\R^+$ be the density of the transformed distribution under $\Tcs$. Then:
\begin{equation}
    p(\Tcs(z, r)) = \frac{\pi(z,r)}{(1-r^2)^{\frac{D}{2}-1}}.
\end{equation}
\end{prop}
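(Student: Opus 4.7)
The plan is to specialize the general density-transformation formula \Cref{eq:riemannian_det_jac} from \Cref{sec:volume.elements} to this case, with $\MM=\sphere{D-1}\times(-1,1)$ and $\NN=\sphere{D}$ both embedded in $\R^{D+1}$. The map $\Tcs(z,r)=\br{z\sqrt{1-r^2},\,r}$ is already given by a smooth expression on the open set $\set{|r|<1}\subset\R^{D+1}$, so it serves as its own smooth extension to a neighbourhood of $\MM$. Its Jacobian $J$, viewed as a map $\R^{D+1}\to\R^{D+1}$, is the block matrix with upper-left block $\sqrt{1-r^2}\,I_D$, upper-right column $-rz/\sqrt{1-r^2}$, zero lower-left row, and bottom-right entry $1$.

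Next I would pick a convenient orthonormal frame at $(z,r)$. Since $T_{(z,r)}\MM=T_z\sphere{D-1}\oplus\R$, I choose any orthonormal basis $e_1,\ldots,e_{D-1}$ of $T_z\sphere{D-1}\subset\R^D$, so that each $e_i\cdot z=0$, and assemble the $(D+1)\times D$ matrix $E$ whose first $D-1$ columns are $(e_i,0)$ and whose last column is $(0,1)$. Computing $JE$ column-by-column: for $i\le D-1$ the $i$-th column is $(\sqrt{1-r^2}\,e_i,\,0)$, with squared norm $1-r^2$; the last column is $(-rz/\sqrt{1-r^2},\,1)$, with squared norm $r^2\norm{z}^2/(1-r^2)+1=1/(1-r^2)$ because $\norm{z}=1$; and the inner product between the last column and any earlier one is $-r(e_i\cdot z)=0$.

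Thus the columns of $JE$ are pairwise orthogonal, so $E^\top J^\top J E=(JE)^\top(JE)$ is diagonal with entries $1-r^2$ repeated $D-1$ times together with a single entry $1/(1-r^2)$, whose determinant is $(1-r^2)^{D-2}$. Substituting into \Cref{eq:riemannian_det_jac} yields $p(\Tcs(z,r))=\pi(z,r)/(1-r^2)^{D/2-1}$, as claimed. I do not anticipate any real obstacle; the only care needed is to invoke $\norm{z}=1$ and the orthogonality $e_i\cdot z=0$ at the right moments so that the Gram matrix is diagonal, since otherwise one would have to expand a non-trivial determinant rather than simply multiply the squared column norms.
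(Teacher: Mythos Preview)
Your proof is correct and follows the same overall strategy as the paper: specialize the general formula \eqref{eq:riemannian_det_jac} to $\Tcs$, compute the Jacobian of the ambient extension, and show that the Gram matrix $(JE)^\top(JE)$ is diagonal with the stated entries.

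The only difference is tactical. The paper first invokes the rotational symmetry $\Tcs(Gx)=G\,\Tcs(x)$ for rotations $G$ fixing the last coordinate to reduce to the special point $(\sqrt{1-r^2},0,\ldots,0,r)$, where $E$ can be taken to be the identity with its first column removed and the diagonality of $(JE)^\top(JE)$ is then read off by inspection. You instead stay at a general point $(z,r)$ and exploit the intrinsic orthogonality relations $e_i\cdot z=0$ and $e_i\cdot e_j=\delta_{ij}$ of any orthonormal tangent frame on $\sphere{D-1}$ to get diagonality directly. Your route is slightly more economical (no symmetry reduction needed), while the paper's route illustrates a reusable equivariance trick; both land on exactly the same diagonal Gram matrix and determinant $(1-r^2)^{D-2}$.
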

\begin{proof}
The sphere $\sphere{D-1}$ is embedded in $\R^D$. This gives us an embedding of $\MM$ in $\R^{D+1}$. The map $\Tcs$, introduced in \Cref{eq:flow_sd_3}, is easily extended to a map $\R^D\times (-1, 1)\rightarrow\R^{D+1}$ using the same formula $\Tcs(z, r) = (\sqrt{1-r^2}z, r)$.
Its Jacobian is an upper triangular $D+1$ by $D+1$ matrix
\begin{equation*}
J=
\begin{bmatrix}
\sqrt{1-r^2} & 0 & \cdots & \frac{-x_1r}{\sqrt{1-r^2}} \\
0 & \sqrt{1-r^2} &  & \frac{-x_2r}{\sqrt{1-r^2}} \\
\vdots & & \ddots & \vdots \\
0 & \cdots & & 1
\end{bmatrix}.
\end{equation*}

We will use a symmetry argument to simplify the computation of the determinant in \Cref{eq:riemannian_det_jac}. Let $G$ be a rotation of $\R^{D+1}$ that leaves the last coordinate invariant.\footnote{The set of such rotations is the group of rotations of $\R^D$ embedded in $\R^{D+1}.$} Note that for any point $x\in\R^{D+1}$, we have $\Tcs(Gx) = G\Tcs(x)$.
This means that the Jacobian $J$ transforms as a function of $x$ as $J(Gx) = GJ(x)G^\top$.
Note also that if $x$ is in $\MM$, and $E(x)$ is a matrix where the column vectors form a basis of the tangent space at $x$, then $Gx$ is also in $\MM$, and $GE(x)$ is a matrix where the column vectors form a basis of the tangent space at $Gx$. So we can choose $E(Gx)=GE(x)$. With that choice
\begin{align*}
    & \det\br{E(Gx)^\top J(Gx)^\top J(Gx)E(Gx)} \\
    &= \det\br{ E(x)^\top G^\top GJ(x)^\top G^\top GJ(x)G^\top GE(x)} \\
    &= \det\br{E(x)J(x)^\top J(x)E(x)}.
\end{align*}
Since for any $x\in\MM$, we can always choose $G$ such that $Gx$ is of the form $(\sqrt{1-r^2}, 0,\ldots,0,r)$, we can restrict ourselves to this case. For such a choice, the Jacobian simplifies to
\begin{equation*}
J=
\begin{bmatrix}
\sqrt{1-r^2} & 0 & \cdots & \frac{-r}{\sqrt{1-r^2}} \\
0 & \sqrt{1-r^2} &  & 0 \\
\vdots& & \ddots & \vdots \\
0 & \cdots & & 1
\end{bmatrix}.
\end{equation*}
For $E$, we can simply choose the $D+1$ by $D$ matrix made by removing the first column from the identity matrix. Then $JE$ is equal to $J$ with the first column removed:
\begin{equation*}
    JE = \begin{bmatrix}
0 & 0 & \cdots & \frac{-r}{\sqrt{1-r^2}} \\
\sqrt{1-r^2} & 0 &  & 0 \\
0 & \sqrt{1-r^2} &  & 0 \\
\vdots &  & \ddots & \vdots \\
0 & \cdots & & 1
\end{bmatrix}.
\end{equation*}
The product $(JE)^\top JE$ is simply a diagonal matrix of size $D$ with diagonal $(1-r^2,\ldots,1-r^2,\frac{1}{1-r^2})$. Taking the determinant and applying \Cref{eq:riemannian_det_jac} concludes the proof.
\end{proof}

At first, \Cref{prop:tcs_pull-back} might seem worrying since the density ratio in that proposition vanishes when $r$ is $-1$ or $1$. So, as $r$ approaches the boundary of the interval $[-1, 1]$, it seems that the correction term to the density will tend to infinity and lead to numerical instability.

What saves us is that we do not use $\Tcs$ on its own, and instead combine it with a particular flow transformation on $\sphere{D-1}\times[-1,1]$ and the inverse $\Tsc$, as shown in \Cref{eq:flow_sd_1,eq:flow_sd_2,eq:flow_sd_3}. In these formulas, the map $g$ is a spline on the interval $[-1, 1]$ which maps $-1$ to $-1$, $1$ to $1$, and has strictly positive slopes $g'(-1)$ and $g'(1)$. Looking only at $-1$ (the case $1$ can be similarly dealt with), this means
\begin{equation*}
g(-1+\epsilon)\approx -1 + g'(-1)\epsilon.   
\end{equation*}
As $\epsilon$ goes to $0$, the density corrections coming from $\Tcs$ and $\Tsc$ combine to 
\begin{equation*}
\br{\frac{D}{2}-1}\log\frac{1-g(-1+\epsilon)^2}{1-(-1+\epsilon)^2},    
\end{equation*}
which is equivalent to
\begin{equation*}
\br{\frac{D}{2}-1}\log g'(-1)
\end{equation*}
as $\epsilon$ goes to $0$. In particular, the terms that tend to infinity cancel each other, and the flow is well-behaved. When implementing the flow, numerical stability is achieved by not adding the terms that cancel each other. Finally, we note a subtle point about what we proved: the sequence of transformations $\Tcs\circ T_{c\rightarrow c}\circ\Tsc$ will transform a distribution with finite density into another distribution with finite density, but we do not guarantee that the resulting density will be continuous.

\section{Detailed Diagram of Recursive Flow on $\sphere{D}$}

In \Cref{fig:flow:s2:diagram:detailed}, we provide an illustration of the recursive construction in \Cref{eq:flow_sd_1,eq:flow_sd_2,eq:flow_sd_3}, showing the specific wiring order of the conditional maps inside the flow. This order is the one implied by the recursion. In general, any other order can be used, or a composition of autoregressive flows with multiple orders.

\begin{figure}[htb]
\begin{center}
    \begin{subfigure}[t]{\textwidth}
        \includegraphics[width=0.45\textwidth]{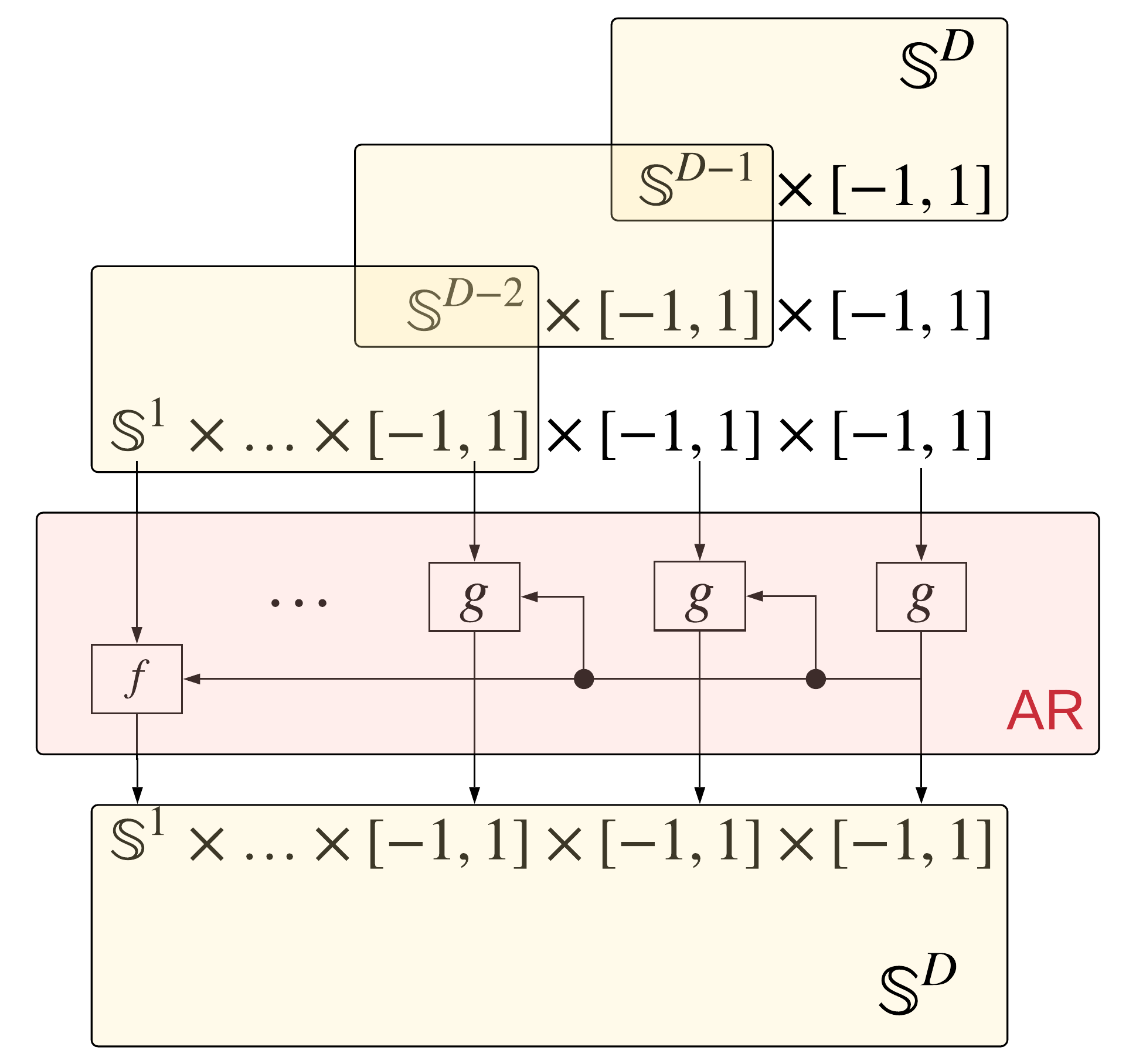}
    \end{subfigure}
\end{center}
\caption{Detailed illustration of the recursive flow on the sphere $\sphere{D}$ showing the explicit wiring of the conditional flows. The sphere $\sphere{D}$ is recursively transformed to the cylinder $\sphere{1}\times[-1, 1]^{D-1}$, then an autoregressive flow is applied to the cylinder, and finally the cylinder is transformed back to the sphere.}\label{fig:flow:s2:diagram:detailed}
\end{figure}

\begin{figure*}[p]
\begin{center}
    \begin{subfigure}[t]{\textwidth}
        \centering
        \includegraphics[width=\textwidth]{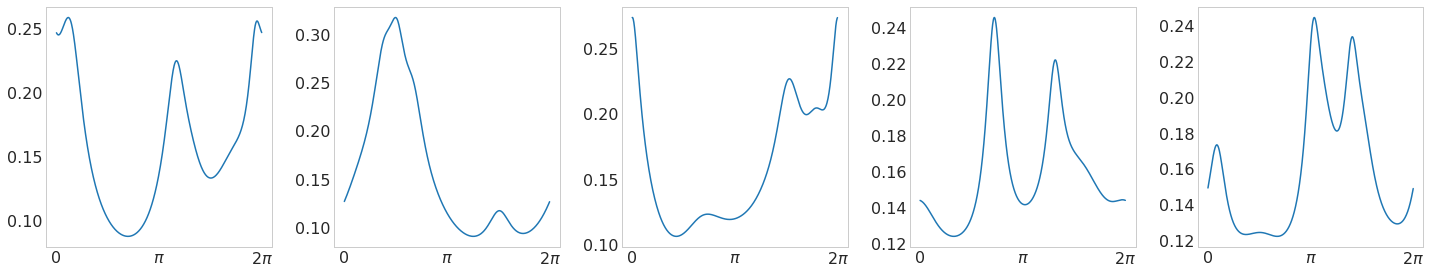}
    \end{subfigure}
\end{center}
\caption{Probability density functions of convex combinations of $15$ M\"obius transformations applied to a uniform base distribution on the circle $\sphere{1}$. Each of these distributions required $30=15\times 2$ parameters.}\label{fig:expressive_moebius}
\end{figure*}

\begin{table*}[p]
\begin{center}
\begin{tabular}{@{}|c|c|l|@{}}
\toprule
\textbf{Target} & \textbf{Expression} & \multicolumn{1}{c|}{\textbf{Parameters}} \\ \midrule
Unimodal & $p_A(\theta_1, \theta_2) \propto \exp [  \cos(\theta_1 - \phi_1) + \cos(\theta_2 - \phi_2)  ]$ & $\phi=(4.18, 5.96)$ \\ \midrule
Multi-modal & $p_B(\theta_1, \theta_2) \propto \frac{1}{3} \sum_{i=1}^3 p_A(\theta_1, \theta_2; \phi_i)$ & $\phi=\{(0.21, 2.85), (1.89, 6.18), (3.77, 1.56)\}$ \\ \midrule
Correlated & $p_C(\theta_1, \theta_2) \propto \exp [ \cos(\theta_1 + \theta_2 - \phi) ]$ & $\phi=1.94$ \\ \bottomrule
\end{tabular}
\caption{Target densities used for experiments on $\torus{2}$.\label{table.tori.target}}
\end{center}
\end{table*}

\begin{figure*}[p]
\begin{center}
\centerline{\includegraphics[width=1.02\linewidth]{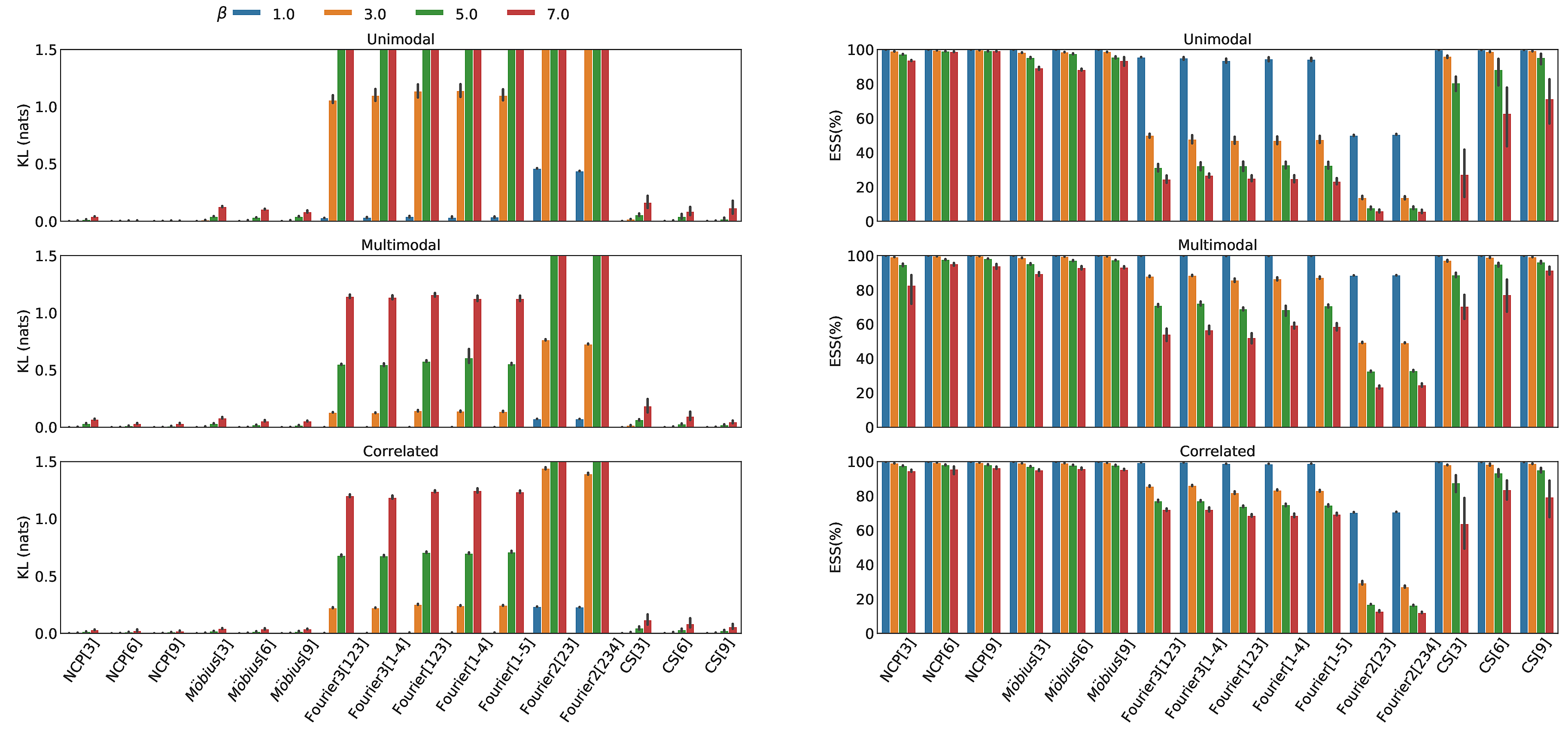}}
\caption{Same as \Cref{fig.torus}, with KL and values for the Fourier transforms added. For the Fourier models, the numbers between brackets represent used frequencies, and a number before the bracket means each frequency was repeated. For example, Fourier$3[1-4]$ is a Fourier model with $12$ frequencies: $3$ frequencies of $k$ for each $k=1,\ldots,4$. 
}
\label{fig:results_torus_all}
\end{center}
\end{figure*}

\section{Examples of M\"obius Transformations}

To illustrate the kind of densities we get on $\mathbb{S}^1$ using the M\"obius flows, we show a few random examples in \Cref{fig:expressive_moebius}.

\section{Fourier Transformations on $\sphere{1}$}

Another family of circle transformations that we considered are \textit{Fourier transformations}, defined by
\begin{align}
    f_{\alpha, \phi, w}(\theta) &= \theta + \sum_i \frac{\alpha_i}{w_i} \sin( w_i \theta - \phi_i ) + \mu,
\end{align}
where $\mu = \sum_i \alpha_i \sin( \phi_i )$, $w_i \in \mathbb{Z}$, $\phi_i \in [0, 2\pi]$ and $\sum_i |\alpha_i| \leq 1$. The integers $w_i$ are fixed frequencies in the Fourier basis.

We found empirically that this family of transformations is not competitive with the other transformations considered in this paper, especially for highly concentrated densities as shown in \Cref{fig:results_torus_all}.

\section{Polynomial Exponential Map}\label{sec:polexpmap}

The polynomial exponential map of \citet{sei2013jacobian} is the exponential-map flow built using the scalar field
\begin{align}
    \phi(x) &= \mu^\top x + x^\top Ax,
\end{align}
where $x \in \sphere{D}$ in the coordinates of the embedding space $\R^{D+1}$. The parameters $\mu$ and $A$ must satisfy the constraint $\norm{\mu}_1 + \norm{A}_1 \leq 1$ where $\norm{\cdot}_1$ is the elementwise $\ell_1$ norm.

\section{Target Densities Used in Experiments}

For the experiments on the torus $\torus{2}$, we used targets built from densities in the von Mises family as shown in \Cref{table.tori.target}.

On the sphere $\sphere{2}$, the target was a mixture of the form
\begin{align}
    p(x) &\propto \sum_{k=1}^4 e^{10 x^\top T_{s \rightarrow e}(\mu_k)},
\end{align}
where $\mu_1 = (0.7, 1.5)$, $\mu_2=(-1, 1)$, $\mu_3=(0.6, 0.5)$, $\mu_4 = (-0.7, 4)$,
$T_{s \rightarrow e}$ maps from spherical to Euclidean coordinates, and $x\in\mathbb{R}^3$ is a point on the embedded sphere in Euclidean coordinates.

On $\SU{2} \cong \sphere{3}$, the target was a mixture of the same form where $\mu_1 = (1.7, -1.5, 2.3)$, $\mu_2=(-3.0, 1.0, 3.0)$, $\mu_3=(0.6, -2.6, 4.5)$, $\mu_4 = (-2.5, 3.0, 5.0)$, 
and $x\in\mathbb{R}^4$ is a point on the embedded sphere in Euclidean coordinates.

\section{Misaligned Density on $\sphere{2}$}

The recursive formulas shown in \Cref{eq:flow_sd_1,eq:flow_sd_2,eq:flow_sd_3} require choosing a sequence of axes in order to construct the cylindrical coordinate system. This may introduce artifacts to the density related to this choice of axes.
To test if this results in numerical problems, we compare the flow from \Cref{eq:flow_sd_1,eq:flow_sd_2,eq:flow_sd_3} on a target density that forms a non-axis-aligned ring against a composition of the same flow with a learned rotation.

The results of this experiment are shown in \Cref{fig:slanted}. We compared both large ($K_s=32$, $K_m=12$) and small ($K_s=3$, $K_m=3$) versions of the auto-regressive M\"obius-Spline flow and observed no significant differences between the two models on $\sphere{2}$. 

More experiments would be necessary to investigate this potential effect in higher dimensions.

\section{NCP as a complex M\"obius transformation}\label{sec:ncp_mobius_equiv}

For a general M\"obius transformation 
\begin{align}
    f (z) &= \frac{a z + b}{c z + d},
\end{align}
where $a, b, c, d, z \in \mathbb{C}$, to define a diffeomorphism on $\sphere{1}$ it must be constrained to be of the form
\begin{align} 
h (z) &= \frac{z - a}{1 - \bar{a} z}.
\end{align}
This form ensures that $h(z) \bar{h}(z) = 1$ if $z \bar{z} = 1$ and has
two real-valued free parameters $\Re(a)$ and $\Im(a)$.

In what follows we show that for the choice $\Im(a) = 0$ and $\Re(a) = - \frac{1 - \alpha}{1 + \alpha}$, the transformation $h (z)$ is equivalent to an
NCP transform $w = 2 \arctan \left( \alpha \tan \left( \frac{\theta}{2}
\right) + \beta \right)$ with scale parameter $\alpha$ and offset parameter $\beta
= 0$ (assuming $w, \theta \in (- \pi, \pi)$). If we define $\theta$ via $z =
e^{i \theta}$, the goal is to show that $w$ defined via
\begin{align}
  e^{i w} &= \frac{e^{i \theta} + \frac{1 - \alpha}{1 + \alpha}}{1 + \frac{1 - \alpha}{1 + \alpha}
  e^{i \theta}}, \label{eq.moebius.ncp}
\end{align}
follows the NCP transformation rule \begin{equation*}
w = 2 \arctan \left( \alpha \tan
\left( \frac{\theta}{2} \right) \right) \mod 2 \pi.
\end{equation*}
We begin by
expanding \Cref{eq.moebius.ncp} in terms of more basic trigonometric
quantities,
\begin{align*}
  e^{i w} &= \frac{e^{i \theta} + \frac{1 - \alpha}{1 + \alpha}}{1 + \frac{1 - \alpha}{1 +
  \alpha} e^{i \theta}} & \\
  &= \frac{e^{i \theta} + \frac{1 - \alpha}{1 + \alpha}}{1 + \frac{1 - \alpha}{1 + \alpha} e^{i
  \theta}} \frac{1 + \frac{1 - \alpha}{1 + \alpha} e^{- i \theta}}{1 + \frac{1 - \alpha}{1 +
  \alpha} e^{- i \theta}} & \\
  &= \frac{e^{i \theta} + 2 \frac{1 - \alpha}{1 + \alpha} + \left( \frac{1 - \alpha}{1 + \alpha}
  \right)^2 e^{- i \theta}}{\frac{ 2 (\cos (\theta) + \alpha^2 (- \cos
  (\theta)) + \alpha^2 + 1)}{(\alpha + 1)^2}}\\
  &= \frac{\left( e^{i \frac{\theta}{2}} +
  \frac{1 - \alpha}{1 + \alpha} e^{- i \frac{\theta}{2}} \right)^2}{\frac{ 2
  (\cos (\theta) + \alpha^2 (- \cos (\theta)) + \alpha^2 + 1)}{(\alpha + 1)^2}}\\
  &= \frac{1}{2} \frac{\left( (1 + \alpha) e^{i \frac{\theta}{2}} + (1 - \alpha) e^{-
  i \frac{\theta}{2}} \right)^2}{ \cos (\theta) + \alpha^2 (- \cos
  (\theta)) + \alpha^2 + 1}\\
  &= \frac{2 \left( \cos \left( \frac{\theta}{2} \right) +
  i\alpha \sin \left( \frac{\theta}{2} \right) \right)^2}{\cos (\theta)
  + \alpha^2 (- \cos (\theta)) + \alpha^2 + 1}.
\end{align*}
In order to isolate $w$, only the numerator $v = 2 \left( \cos \left(
\frac{\theta}{2} \right) + i\alpha \sin \left( \frac{\theta}{2} \right) \right)^2$
of the expression above matters as we are only interested in ratios of the
imaginary and real parts of this expression, $\tan (w) = \frac{\Im(v)}{\Re(v)}$. The numerator can be expanded as
\begin{align*} 
\frac{2 \left( \cos \left( \frac{\theta}{2} \right) + i\alpha \sin \left(
   \frac{\theta}{2} \right) \right)^2}{2 \cos^2 \left( \frac{\theta}{2}
   \right)} &= - \alpha^2 \tan^2 \left( \frac{\theta}{2} \right)\\ 
   &+ 2 \alpha \tan \left(
   \frac{\theta}{2} \right) {\bf i} + 1,
\end{align*}
from which we conclude,
\begin{align} 
\tan (w) = \frac{\Im(v)}{\Re(v)} = \frac{2 \alpha \tan \left(
   \frac{\theta}{2} \right)}{1 - \alpha^2 \tan^2 \left( \frac{\theta}{2} \right)} .
\end{align}
Using the trigonometric formula $\tan (2 x) = \frac{2 \tan
(x)}{1 - \tan (x)^2}$, we arrive at the final result
\begin{align*}
\tan \left( \frac{w}{2} \right) &= \alpha \tan \left( \frac{\theta}{2} \right)\\
  &\Leftrightarrow\\
w &= 2 \arctan \left( \alpha \tan \left( \frac{\theta}{2}
  \right) \right) \mod 2 \pi.
\end{align*}

\section{Application: Multi-Link Robot Arm}\label{sec:robot_arm}

As a concrete application of flows on tori, we consider the problem of approximating the posterior density over joint angles $\theta_{1, \ldots, 6}$ of a $6$-link $2$D robot arm, given (soft) constraints on the position of the tip of the arm. The possible configurations of this arm are points in $\torus{6}$.
The position $r_k$ of a joint $k=1,\ldots,6$ of the robot arm is given by
\begin{align}
    r_k &= r_{k-1} +  \br{l_k\cos\br{\sum_{j \leq k}\theta_j}, l_k\sin\br{\sum_{j \leq k}\theta_j}},\nonumber
\end{align}
where $r_0= (0, 0)$ is the position where the arm is affixed, $l_k=0.2$ is the length of the $k$-th link, and $\theta_k$ is the angle of the $k$-th link in a local reference frame.
The constraint on the position of the tip of the arm, $r_6$, is expressed in the form of a Gaussian-mixture likelihood $p(r_6\g \theta_{1, \ldots, 6})$ with two components. The prior $p(\theta_{1, \ldots, 6})$ is taken to be a uniform distribution on $\torus{6}$.  
The experimental results are illustrated in \Cref{fig:robot_arm}.

\section{Application: Learning from samples}\label{sec:globe}

In most of the experiments shown on this paper, we trained the models to fit a target density known up to a normalization constant (i.e.\ an inference problem). In this experiment we train our flow directly on data samples instead.

Training a flow-based model from data samples via maximum likelihood requires an explicit computation of the inverse map as shown in \Cref{eq:flow_jac}. 
To demonstrate this is feasible with data coming from a non-trivial target density on the sphere $\sphere{2}$ (i.e.\ that would require a large number of mixture components from simpler densities such as von Mises), we created a dataset of samples on the sphere coming from a density shaped as Earth's continental map as shown in \Cref{fig:globe} (left).

We trained a flow built from stacking two autoregressive flows. Each flow in the stack used circular splines and standard splines on the interval. The model was trained to maximize the likelihood of the dataset for $100{,}000$ training steps. Both splines used $K_s=80$ segments. The neural networks producing the spline parameters are the same as for the other experiments.
In \Cref{fig:globe} (middle) we show samples from the learned model overlaid on Earth's map and in \Cref{fig:globe} (right) we show a heat map of the learned density.

\begin{figure*}[p]
\centering
\includegraphics[width=0.6\textwidth]{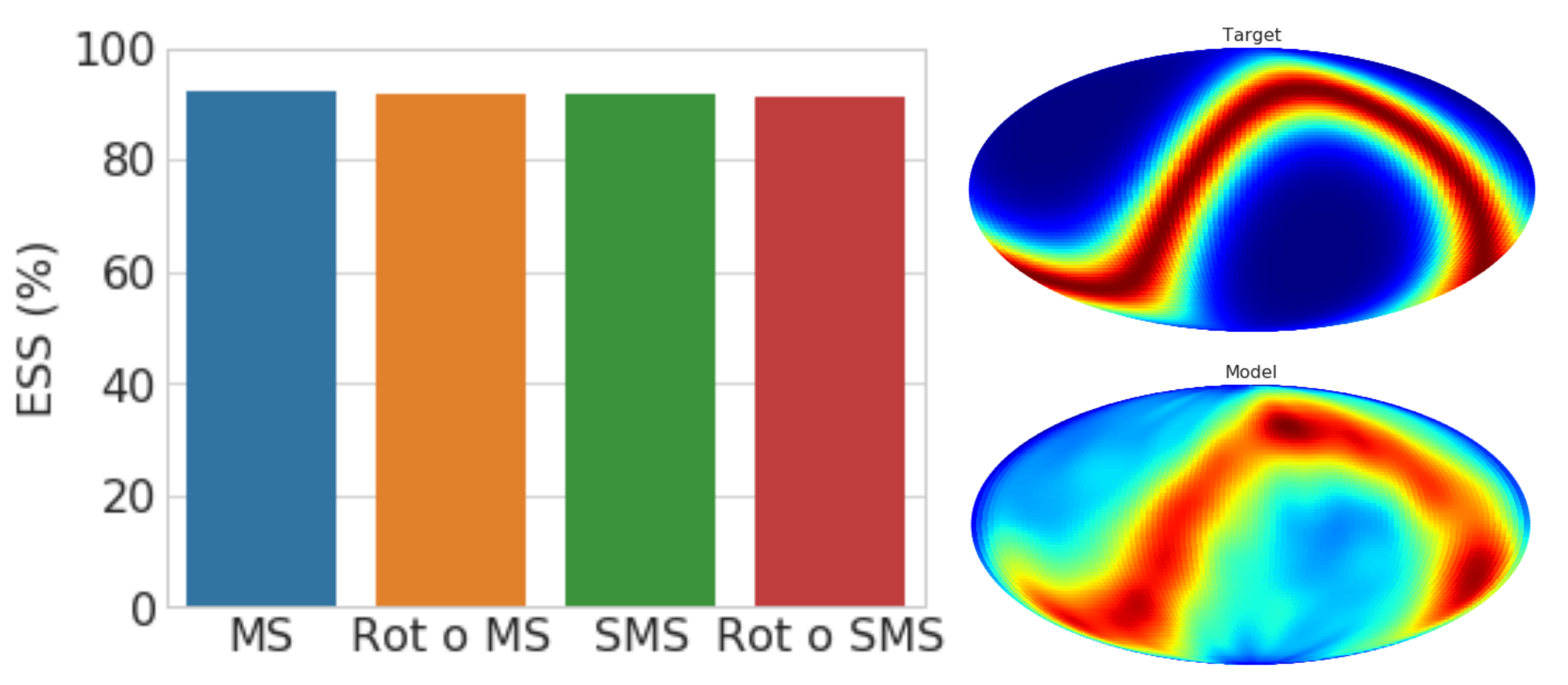}
\caption{Learning a non-axis-aligned density on $\sphere{2}$ using \Cref{eq:flow_sd_1,eq:flow_sd_2,eq:flow_sd_3} with and without composing with a learnable rotation. We compare M\"obius-spline flow (MS) ($K_s=32$, $K_m=12$), learnable rotation composed with MS (Rot $\circ$ MS), small MS ($K_s=3$, $K_m=3$) (SMS) and learnable rotation composed with SMS (Rot $\circ$ SMS)\@. We observed no substantial differences between these models, suggesting that the particular choice of axis inside the flow has no impact on performance.
}\label{fig:slanted}
\end{figure*}

\begin{figure*}[p]
    \centering
    \includegraphics[width=0.6\textwidth]{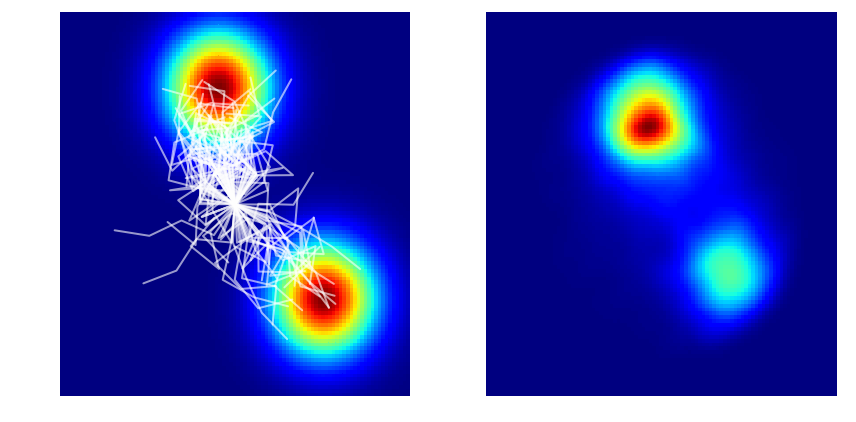}
\caption{Learning the posterior density over joint angles of a $6$-link $2$D robot arm. We used an autoregressive M\"obius flow on the torus $\torus{6}$ to approximate the posterior density of joint angles resulting in a Gaussian mixture density for the tip of the robot arm. {\bf Left}: The heat map shows the target density for the tip of the robot arm, a Gaussian mixture with two modes with centres at $(-0.5, 0.5)$ and $(0.6, -0.1)$. White paths show arm configurations sampled from the learned model in angle space converted to position space. {\bf Right}: Density of the tips of the robot arm using samples from the learned model.}\label{fig:robot_arm}
\end{figure*}

\begin{figure*}[p]
    \centering
    \includegraphics[width=\linewidth]{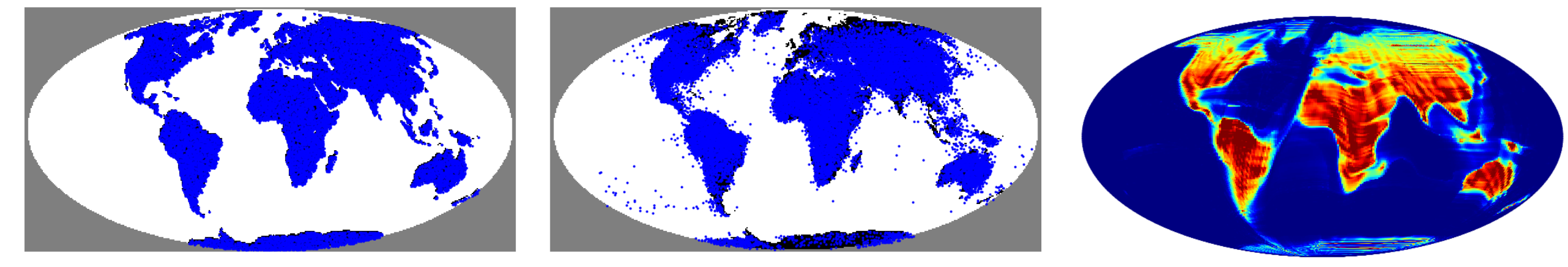}
\caption{Learning a complex density from data samples using autoregressive spline flows. {\bf Left}: Target density from which i.i.d.\ data samples were generated. {\bf Middle}: Model samples overlaid on target density;
{\bf Right}: Heat map of the learned density.}\label{fig:globe}
\end{figure*}

\end{document}